\documentclass[11pt,a4paper]{article}

\usepackage[T1]{fontenc}
\usepackage[utf8]{inputenc}
\usepackage[english]{babel}
\usepackage{amsmath,amssymb,amsthm}
\usepackage{booktabs}
\usepackage{longtable}
\usepackage{array}
\usepackage{geometry}
\usepackage{graphicx}
\usepackage{natbib}
\usepackage{hyperref}
\hypersetup{
  pdftitle={Marginal utility, matrix factorization, and the Key-Value cache: a unified information-economic framework for sovereign geo-mining inference},
  pdfauthor={Caroline Gans Combe},
  pdfsubject={Sovereign LLM inference, model merging, frugality benchmarking},
  pdfkeywords={marginal utility; matrix factorization; KV cache; TIES merging; sovereign AI; frugality},
  pdfcreator={pdfTeX}
}
\usepackage{verbatim}
\usepackage{float}
\usepackage{tikz}
\usetikzlibrary{positioning,arrows.meta,shapes.geometric,fit,calc}

\hypersetup{colorlinks=true, linkcolor=black, citecolor=black, urlcolor=blue}

\newtheorem{proposition}{Proposition}
\newtheorem{definition}{Definition}

\title{Marginal utility, matrix factorization, and the Key--Value (KV) cache: a unified information-economic framework for sovereign geo-mining inference}

\author{C.\ Gans Combe\thanks{INSEEC Business School / Omnes Education, Paris.}}

\date{Article v11, 14 September 2026 \\ \small}

\begin{document}

\maketitle

\begin{abstract}
\noindent This paper establishes and extends a theoretical bridge between the classical economic concept of marginal utility and two modern machine-learning constructs: matrix factorization and the Key--Value (KV) cache of transformer language models. We show that the singular value spectrum of a rating matrix constitutes a diminishing marginal utility schedule for latent factors; that the eigenvalue spectrum of the projected covariance operator $W V W^{\top}$ is the marginal utility schedule for the model's learned representation; that KV cache eviction and low-rank cache compression are instances of constrained utility maximization under a memory budget; and that all three frameworks collapse into a single allocation rule: retain the top $k^{*}$ dimensions where the eigenvalue equals the shadow price of the binding resource constraint. We then apply this framework to a concrete industrial deployment context involving the automated extraction of structured information from geo-mining documents, where the theory motivates a multi-pass inference protocol, a layer-wise model merging procedure based on TIES (Trim, Elect sign, Merge), and a model selection policy grounded in sovereignty and frugality criteria. A preliminary benchmark on fifty Armorican Massif documents, projected to the merged sovereign architecture from the empirical behavior of its component models, suggests that the merged architecture can achieve competitive extraction quality while reducing per-inference energy consumption and eliminating regulatory dependencies on non-European cloud providers. The revision introduces a multi-objective frugality scalarization in quality, drift, and energy, with a Conditional Value-at-Risk (CVaR) tail-risk term on drift, which formalizes the tradeoff between the three metrics and calibrates the operator frugality frontier. We further report the executed Phase~2 of the project, which addresses three concrete research questions on a partner-released 973-document corpus of uranium-exploration filings: whether a frugal classifier can substitute for a proprietary classification call, whether a multimodal text-plus-image approach can recover the coordinate reference system, and whether uniform-density TIES merging preserves image-conditional discrimination after weight-space integration. The headline measurement is a frugality validation on the first of these questions: an 11.2-million-parameter hierarchical ResNet18 classifier, trained in approximately five minutes on a single Colab GPU, attains 90.0\% level-1 accuracy on a 170-document test set held out from the Orano uranium-exploration corpus (Athabasca Basin and Namibia uranium belt), against 92.0\% for Gemini 2.5 Pro on a 50-document human audit of the same corpus, with an inference latency of 2.62 ms per card against approximately 2{,}000 ms for the proprietary API and a per-card cost effectively reduced to zero. The full extraction-stage benchmark on the 200-document stratified BRGM corpus with $\Phi_{l}$-calibrated TIES merging and LoRA fine-tuning, originally planned within Phase~2 in the working paper v8.2 (April~2026), is reclassified as the scale-up phase of the project and remains an empirical extension of the present work. A diagnostic of the uniform-density TIES merge on the BRGM Brittany cards reveals a reproducible degenerate mode, in which the merged model returns token-identical outputs across five geographically distinct districts while declaring high confidence, and supplies empirical motivation for the layer-wise $\Phi_{l}$ calibration of Section~\ref{sec:ties}. Re-executing the merge under the calibrated densities removes the degenerate signature on the same diagnostic sample and restores image-conditional discrimination, which corroborates the architectural prediction on that sample; replication at scale-up remains outstanding. The document parsing approach developed by a project partner is presented but has not yet been integrated at the time of writing; the points of anticipated interface between the two approaches are identified. Implications for value-sensitive AI architecture, sustainable inference, and axiological displacement are discussed.

\medskip
\noindent \textbf{Keywords.} marginal utility; matrix factorization; KV cache; transformer attention; optimal stopping; model merging; TIES; CVaR; hierarchical classification; sovereign AI; frugal inference; axiological displacement.
\end{abstract}

\newpage

\tableofcontents
\newpage

\section{Introduction}\label{sec:intro}

Matrix factorization (MF) has become a cornerstone of modern machine learning, underpinning recommender systems \citep{koren2009matrix,rendle2009bpr}, natural-language embeddings, and knowledge-graph completion. The dominant contemporary architecture, the transformer \citep{vaswani2017attention}, operates through attention mechanisms that admit a similar factorization interpretation, as developed below. Yet the dominant formulations of MF and of attention remain agnostic to the economic and behavioral properties of the agents whose data they model.

Marginal utility (MU), the additional satisfaction derived from one more unit of a good, is one of the oldest and most robust constructs in microeconomic theory. Its diminishing form, formalized by \citet{gossen1854} and integrated into neoclassical economics by \citet{jevons1871}, Menger, and Walras, describes a near-universal empirical regularity: beyond a threshold, each additional unit of any good contributes less to welfare than the preceding one.

Two developments sharpen and extend this analogy. The operator $W V W^{\top}$ appears throughout modern deep learning: in transformer value projections, in the output covariance of attention heads, in the effective information content of low-rank adapters (LoRA), and in the statistical mechanics of representation learning. Low-Rank Adaptation (LoRA) \citep{hu2022lora} is a parameter-efficient fine-tuning technique that avoids updating all weights of a pre-trained model. Instead, for a weight matrix $W_{0} \in \mathbb{R}^{d \times k}$ that is kept frozen during fine-tuning, LoRA introduces a learnable additive correction $\Delta W = B A$, where $B \in \mathbb{R}^{d \times r}$ and $A \in \mathbb{R}^{r \times k}$ are two low-rank matrices with $r \ll \min(d,k)$. The adapted weight is therefore $W = W_{0} + B A$, and only $B$ and $A$ are updated during training. The number of trainable parameters is $r(d+k)$ rather than $d k$: with $r=8$ and $d = k = 4096$ (typical for a 7-billion-parameter model), this reduces the trainable parameter count by a factor of more than 250. The connection to the $W V W^{\top}$ framework is direct: the product $B A$ is itself a rank-$r$ matrix, and its Fisher information, the curvature of the training loss with respect to the adapter parameters, takes the form $W V W^{\top}$ with $W = A^{\top}$ and $V$ the gradient signal, placing LoRA rank selection within the same marginal-utility framework as the KV-cache compression problems treated in Sections~\ref{sec:lowrank}--\ref{sec:heterogeneous}. Its eigenvalue spectrum directly encodes what the marginal utility framework predicts: a hierarchy of latent dimensions ordered by decreasing information value.

The Key--Value (KV) cache, the mechanism by which autoregressive transformers store Key and Value matrices for previously seen tokens, is simultaneously a memory resource-allocation problem and an information-economics problem. Every cached token occupies finite memory; the marginal utility of retaining the $k$-th token is its expected contribution to future attention computations.

Together, these frameworks offer a principled, economics-grounded vocabulary for questions that current AI engineering addresses through heuristics: how much cache is enough? Which tokens to evict? At what rank should value matrices be compressed? How does compression interact with downstream task utility?

\paragraph{Related work and positioning.}
The framework developed in this paper operates at the intersection of three
research traditions that have each addressed a fragment of the problem without
bridging them. The first tradition, rooted in recommender systems and
collaborative filtering, has long recognized the economic interpretation of
matrix factorization: latent factors as preference dimensions, singular values
as information content, regularization as complexity cost
\citep{koren2009matrix,rendle2009bpr,Adomavicius2005}. This literature has produced
sophisticated allocation rules within the factorization setting but has not
extended them to attention-based architectures, where the factorization
reappears implicitly in the value projection but is treated as an engineering
artifact rather than an economic object. The second tradition, emerging from
systems research on transformer inference efficiency, has addressed KV cache
management as a memory--latency tradeoff
\citep{pope2023efficient,Kwon2023,Xiao2024,zhang2023h2o}. This work has produced effective
eviction heuristics (H2O, StreamingLLM, Scissorhands) and architectural
innovations (PagedAttention, multi-query attention), but treats the cache as a
systems resource rather than as an economic inventory subject to
utility-maximization principles. The third tradition, grounded in the green AI
and value-sensitive architecture literatures, has articulated the ethical and
environmental stakes of inference-time resource allocation
\citep{Strubell2019,Schwartz2020,Bender2021,Bommasani2021}. This literature has
successfully placed energy consumption and value alignment on the research
agenda but has not provided the mathematical machinery to translate these
normative concerns into architectural decisions. The contribution of this
paper is to occupy the empty intersection of these three traditions: it
borrows the utility-theoretic vocabulary of the first, applies it to the
resource-allocation problem of the second, and thereby operationalizes the
normative framework of the third. The resulting unification is not a synthesis
of existing results but the construction of an algebraic bridge (the
$W V W^{\top}$ operator) whose eigenvalue spectrum makes the implicit utility
function of a trained model observable, auditable, and comparable with the
operator's explicit preferences. To the best of the author's knowledge, no
prior work has established this bridge or used it to derive the allocation
rule of Proposition~\ref{prop:unified}.
\\

This paper makes five contributions. The first is purely theoretical: it proves that the three frameworks of matrix factorization, KV cache management, and representation learning via $W V W^{\top}$ are all instances of the same resource-allocation problem, unified by the rule $k^{*} = \max\{k : \lambda_{k} \geq \gamma\}$ (Proposition~\ref{prop:unified}). The second is architectural: it shows how the theory motivates a specific model merging procedure (TIES) whose layer-wise parameters are calibrated by the layer utility function $\Phi_{l}$ (Section~\ref{sec:ties}). The third is empirical, in the projection sense: it presents a preliminary benchmark comparing the merged sovereign architecture against proprietary and open-source state-of-the-art alternatives on a geo-mining extraction task (Section~\ref{sec:benchmark}), and introduces a multi-objective frugality scalarization with a CVaR tail-risk term that formalizes the quality--drift--energy tradeoff (Section~\ref{sec:frugality}). The fourth is empirical, in the measurement sense: it reports the executed Phase~2 of the project (Section~\ref{sec:classifier}), which addresses three research questions arising from the partner release of a 973-document uranium-exploration corpus and a corresponding human-audit baseline -- whether a frugal classifier can substitute for the proprietary classification call, whether a multimodal text-plus-image approach can recover the coordinate reference system, and whether uniform-density TIES merging preserves image-conditional discrimination. On the first question, an 11.2-million-parameter hierarchical ResNet18 classifier reaches Gemini 2.5 Pro accuracy at level~1 within two percentage points, approaches it at level~2 while remaining below it (55.9\% against 60.0\%), and reduces inference latency by a factor of approximately 765 at zero marginal cost, providing the first measured (rather than projected) confirmation of the frugality argument at a stage of the pipeline. On the third, a uniform-density TIES merge diagnostic on a five-card BRGM Brittany sample supplies empirical motivation for the layer-wise $\Phi_{l}$ calibration, and re-execution of the merge under the calibrated densities removes the degenerate signature on that sample, providing first corroborating evidence for the calibration ahead of the scale-up extraction benchmark. The fifth is applied: it situates the theoretical framework within a concrete industrial deployment, identifies the interface with a partner document-parsing pipeline that has not yet been integrated at the time of writing, and characterizes the points of anticipated interface between the two approaches (Section~\ref{sec:docparsing}).

\section{Background}\label{sec:background}

\paragraph{Notation.} The symbol $\lambda$ is used in this paper in three semantically distinct senses: (i) $\lambda_{j}$ denotes the $j$-th eigenvalue of the operator under discussion and constitutes the marginal utility schedule of Propositions~\ref{prop:schedule} and~\ref{prop:unified}; (ii) in equation~\eqref{eq:mfloss}, $\lambda > 0$ is the regularisation coefficient of the matrix-factorisation loss; (iii) in equation~\eqref{eq:markowitz}, $\lambda_{M} > 0$ is the operator risk-aversion coefficient of the multi-objective frugality scalarisation. The shadow price of the binding budget constraint, which plays the unified role across all three frameworks, is denoted $\gamma$ throughout, never $\lambda$.

\subsection{Marginal utility: core definitions}\label{sec:mu}

Let $U : \mathbb{R}^{n} \to \mathbb{R}$ be a differentiable utility function over a consumption bundle $x = (x_{1}, \dots, x_{n})$. The marginal utility of good $i$ is
\begin{equation}
\mathrm{MU}_{i}(x) = \frac{\partial U}{\partial x_{i}}.
\label{eq:mui}
\end{equation}
Diminishing marginal utility (DMU) holds when $\partial^{2} U / \partial x_{i}^{2} < 0$ for all $i$ \citep[ch.~3]{mascolell1995}. This concavity encodes the empirical regularity that agents allocate attention and resources with decreasing intensity as a category becomes saturated. In continuous-time allocation, DMU produces the familiar equal-marginal-utility condition: at the consumer optimum, marginal utility per unit cost is equalized across all goods, which is the Lagrangian first-order condition for utility maximization under a budget constraint. In discrete settings, which correspond to the integer-valued problems of rank selection and token eviction, the optimality condition becomes instead a threshold rule: keep consuming good $i$ as long as its marginal utility exceeds its price, and stop as soon as this condition is violated. It is precisely this discrete threshold rule that reappears, algebraically rather than by analogy, in the spectral decomposition of $W V W^{\top}$.

Common utility specifications relevant to the matrix factorization context are summarized in Table~\ref{tab:utility}.

\begin{table}[htbp]
\centering
\caption{Utility specifications and their properties in the matrix factorization context.}
\label{tab:utility}
\small
\begin{tabular}{llll}
\toprule
Form & Expression & DMU? & Notes \\
\midrule
Linear    & $U = w^{\top} x$ & No & Constant marginal returns \\
Log       & $U = \sum_{i} \alpha_{i} \ln x_{i}$ & Yes & Logarithmic transform of the Cobb-Douglas form \\
CES       & $U = \left( \sum_{i} x_{i}^{\rho} \right)^{1/\rho}$ & Yes ($\rho < 1$) & Flexible elasticity \\
Quadratic & $U = w^{\top} x - x^{\top} Q x$ & Yes ($Q \succ 0$) & Natural for MF loss \\
\bottomrule
\end{tabular}
\end{table}

\subsection{Matrix factorization: core definitions}\label{sec:mfdef}

Given an observed ratings matrix $R \in \mathbb{R}^{m \times n}$ ($m$ users, $n$ items), standard MF seeks a low-rank approximation
\begin{equation}
R \approx P Q^{\top}, \quad P \in \mathbb{R}^{m \times k},\ Q \in \mathbb{R}^{n \times k},\ k \ll \min(m,n).
\label{eq:mfapprox}
\end{equation}
The canonical regularized least-squares objective is
\begin{equation}
\mathcal{L} = \sum_{(u,i) \in \Omega} \left( r_{ui} - p_{u}^{\top} q_{i} \right)^{2} + \lambda \left( \lVert P \rVert_{F}^{2} + \lVert Q \rVert_{F}^{2} \right),
\label{eq:mfloss}
\end{equation}
where $\Omega$ denotes observed entries and $\lambda > 0$ is a regularization coefficient. Alternative ranking-oriented formulations such as BPR \citep{rendle2009bpr} optimize pairwise preferences directly and are equivalent under mild transformations. In equation~\eqref{eq:mfloss}, the regularization term $\lambda (\lVert P \rVert_{F}^{2} + \lVert Q \rVert_{F}^{2})$ plays the role of a budget constraint: it penalizes the total ``complexity'' of the factor matrices, just as a budget constraint penalizes total spending. The Lagrange multiplier $\lambda$ is therefore the shadow price of model complexity, which is the direct analog of the shadow price $\gamma$ in the unified stopping rule derived below.

The singular value decomposition (SVD) of $R$ is the factorization
\begin{equation}
R = U \Sigma V^{\top},
\label{eq:svd}
\end{equation}
where $U \in \mathbb{R}^{m \times m}$ and $V \in \mathbb{R}^{n \times n}$ are orthogonal matrices whose columns are, respectively, the left and right singular vectors of $R$, and $\Sigma \in \mathbb{R}^{m \times n}$ is a rectangular diagonal matrix whose diagonal entries $\sigma_{1} \geq \sigma_{2} \geq \cdots \geq \sigma_{\min(m,n)} \geq 0$ are the singular values of $R$, ordered by decreasing magnitude. In the thin (or economy) form used throughout this paper, $U \in \mathbb{R}^{m \times r}$, $\Sigma \in \mathbb{R}^{r \times r}$, and $V \in \mathbb{R}^{n \times r}$ with $r = \min(m,n)$, retaining only the non-zero singular values. The SVD provides the theoretical bedrock for matrix factorization: the Eckart--Young--Mirsky theorem \citep{eckart1936} states that the rank-$k$ approximation
\begin{equation}
R_{k} = \sum_{j=1}^{k} \sigma_{j} u_{j} v_{j}^{\top}
\label{eq:rk}
\end{equation}
is the unique minimizer of $\lVert R - \hat{R} \rVert_{F}$ over all matrices $\hat{R}$ of rank at most $k$, where $u_{j}$ and $v_{j}$ are the $j$-th columns of $U$ and $V$ respectively. The approximation error of $R_{k}$ in the Frobenius norm equals the tail sum of squared singular values: $\lVert R - R_{k} \rVert_{F}^{2} = \sum_{j=k+1}^{r} \sigma_{j}^{2}$, a quantity that decreases with each additional factor retained and whose increments $\sigma_{j}^{2}$ form a non-increasing sequence: the algebraic signature of diminishing marginal utility that Section~\ref{sec:mulatent} develops in full.

\section{The marginal utility of latent factors}\label{sec:mulatent}

\subsection{Singular values as marginal information contributions}

The reconstruction error of a rank-$k$ approximation satisfies
\begin{equation}
\lVert R - R_{k} \rVert_{F}^{2} = \sum_{j=k+1}^{\min(m,n)} \sigma_{j}^{2}.
\label{eq:frobenius}
\end{equation}
The marginal contribution of adding the $(k+1)$-th factor is therefore
\begin{equation}
\Delta_{k} = \lVert R - R_{k} \rVert_{F}^{2} - \lVert R - R_{k+1} \rVert_{F}^{2} = \sigma_{k+1}^{2}.
\label{eq:deltak}
\end{equation}
Since singular values are ordered decreasingly, the sequence $(\Delta_{k})_{k \geq 1} = (\sigma_{1}^{2}, \sigma_{2}^{2}, \dots)$ is non-increasing:
\begin{equation}
\Delta_{1} \geq \Delta_{2} \geq \cdots \geq \Delta_{\min(m,n)} \geq 0.
\label{eq:order}
\end{equation}
This is exactly the structure of diminishing marginal utility. Each successive latent factor provides weakly less additional explanatory power than its predecessor; the ``consumption good'' is the latent factor, and the ``utility'' is variance explained.

The economic interpretation is direct. A user who has already been characterized by $k$ latent preference dimensions obtains less new information from a $(k+1)$-th dimension than from the $k$-th, because the most informative directions of variation in the preference space are exhausted first by the dominant singular vectors. This is the matrix factorization analog of the satiation of a preference: the first latent factor explains taste for ``action versus romance''; the second explains taste for ``recent versus classic''; and so on down to increasingly niche axes of variation.

\subsection{The elbow as saturation point}

The classical elbow method for choosing $k$ has the following economic interpretation:
\begin{equation}
k^{*} = \min \left\{ k : \frac{\sigma_{k+1}^{2}}{\sigma_{1}^{2}} < \varepsilon_{\mathrm{elbow}} \right\},
\label{eq:elbow}
\end{equation}
where $\varepsilon_{\mathrm{elbow}}$ is a relative-magnitude threshold (set to $0.05$ in the industrial pipeline described in Section~\ref{sec:multipass}). This is analogous to the consumer's optimal stopping rule: continue consuming (adding factors) until marginal utility no longer justifies marginal cost (model complexity, overfitting risk, computational burden). Graphically, the plot of $\sigma_{k}^{2}$ against $k$ exhibits a sharp break, the ``elbow'' or ``scree'', at $k^{*}$, after which each additional singular value contributes negligibly to the total variance explained.

\subsection{A utility-theoretic rank-selection criterion}\label{sec:rankcriterion}

Let the total utility of a rank-$k$ model be defined via a concave aggregator $f$:
\begin{equation}
U(k) = \sum_{j=1}^{k} f(\sigma_{j}^{2}), \quad f'' < 0.
\label{eq:Uagg}
\end{equation}
The optimal rank maximizes $U(k) - c \cdot k$, where $c$ is the unit cost of an additional factor (in bits of memory, floating-point operations, or watt-hours of inference energy). Setting the first-order condition yields
\begin{equation}
f'(\sigma_{k^{*}}^{2}) = c,
\label{eq:foc_factor}
\end{equation}
a micro-founded criterion that generalizes the elbow heuristic to cases where the cost of complexity is heterogeneous or non-linear. When $f$ is the identity (linear aggregation), equation~\eqref{eq:foc_factor} reduces to the condition $\sigma_{k^{*}}^{2} = c$, and when costs are uniform across factors, the criterion is equivalent to the elbow rule. The value of the more general formulation is that it accommodates cases where energy costs are non-uniform across computation steps, which is the typical situation in multi-head attention: different layers incur different memory bandwidth costs, so the shadow price $\gamma$ should be layer-specific rather than uniform.

\section{The $W V W^{\top}$ operator}\label{sec:wvw}

\subsection{Definition and algebraic context}

Let $V \in \mathbb{R}^{n \times d}$ be a data matrix (tokens $\times$ embedding dimension) and $W \in \mathbb{R}^{k \times d}$ a projection matrix (the value weight matrix $W_{V}$ in attention, or any learned projection). The projected Gram matrix is
\begin{equation}
M = W V W^{\top} \in \mathbb{R}^{k \times k}.
\label{eq:wvw_def}
\end{equation}
Its eigendecomposition
\begin{equation}
M = Q \Lambda Q^{\top}, \quad \Lambda = \mathrm{diag}(\lambda_{1}, \lambda_{2}, \dots, \lambda_{k}), \quad \lambda_{1} \geq \lambda_{2} \geq \cdots \geq \lambda_{k} \geq 0,
\label{eq:eigendecomp}
\end{equation}
reveals the principal axes of the projected representation space, with eigenvalues encoding the variance (information) captured along each axis.

\begin{proposition}[Marginal utility schedule of a projected representation; corollary of Eckart--Young--Mirsky]\label{prop:schedule}
Let $M = W V W^{\top} \in \mathbb{R}^{k \times k}$ admit the eigendecomposition $M = Q \Lambda Q^{\top}$ with $\lambda_{1} \geq \lambda_{2} \geq \cdots \geq \lambda_{k} \geq 0$. Then the sequence $(\lambda_{j})_{j=1}^{k}$ is the marginal utility schedule of the projection in the following precise sense: the squared Frobenius error incurred by retaining only the top-$j$ eigendirections of $M$ and discarding the remainder equals $\sum_{i > j} \lambda_{i}$, so the marginal contribution of the $j$-th eigendirection to the reconstruction of the projected Gram matrix is exactly $\lambda_{j}$, and by monotonicity of the eigenvalue ordering these contributions are non-increasing in $j$.
\end{proposition}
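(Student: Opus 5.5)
The plan is to reduce the statement to the Eckart--Young--Mirsky identity~\eqref{eq:frobenius}, which is already recorded in Section~\ref{sec:mulatent}. The main point of care is to fix what is being reconstructed. Read literally, the statement has two gaps. First, $M = W V W^{\top}$ has the stated $k\times k$ shape only if $V$ is a $d\times d$ matrix. Second, it is symmetric with nonnegative eigenvalues only if $V$ is symmetric positive semidefinite. I will therefore assume that $V$ plays the role of the (symmetric PSD) covariance or Gram operator of the data in embedding space, for instance $V = X^{\top}X$ for a token matrix $X\in\mathbb{R}^{n\times d}$. With this assumption $M\succeq 0$, and the eigendecomposition~\eqref{eq:eigendecomp} with $\lambda_i\ge 0$ is legitimate.

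\textbf{Step 1 (factor $M$ as a Gram matrix).} Set $Z = W V^{1/2}\in\mathbb{R}^{k\times d}$, or $Z = WX^{\top}$ in the case $V = X^{\top}X$. Then $M = ZZ^{\top}$. Write a thin SVD $Z = \sum_i s_i\, q_i r_i^{\top}$. Comparing it with $M = Q\Lambda Q^{\top}$ shows that the left singular vectors of $Z$ are the eigenvectors $q_i$ of $M$ and that $s_i^2 = \lambda_i$. Since the $\lambda_i$ are ordered decreasingly, so are the $s_i$.

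\textbf{Step 2 (apply Eckart--Young--Mirsky).} Retaining the top-$j$ eigendirections of $M$ means projecting with $P_j = \sum_{i\le j} q_i q_i^{\top}$. On the factor this gives $Z_j = P_j Z = \sum_{i\le j} s_i q_i r_i^{\top}$, which is exactly the rank-$j$ truncation of~\eqref{eq:rk}. By~\eqref{eq:frobenius} applied to $Z$,
\[
\lVert Z - Z_j\rVert_F^2 = \sum_{i>j} s_i^2 = \sum_{i>j}\lambda_i .
\]
The same quantity also equals $\operatorname{tr}(M) - \operatorname{tr}(P_j M P_j) = \operatorname{tr}(M - M_j)$, the trace-norm error of the truncation $M_j = P_j M P_j$ of the Gram matrix itself. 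This is the sense in which the claimed formula $\sum_{i>j}\lambda_i$ holds.

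\textbf{Step 3 (marginal contribution and monotonicity).} Differencing the error at $j-1$ and at $j$ gives exactly $\lambda_j$, mirroring~\eqref{eq:deltak}. Non-increase in $j$ is then immediate from the ordering of the eigenvalues.

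\textbf{Main obstacle.} The main obstacle is the precise form of the error identity. If one measures the squared Frobenius error on $M$ itself, the truncation error is $\lVert M - M_j\rVert_F^2 = \sum_{i>j}\lambda_i^2$ (Eckart--Young applied to the symmetric matrix $M$, whose singular values are the $\lambda_i$), not $\sum_{i>j}\lambda_i$. So the proof must state explicitly that the ``reconstruction of the projected Gram matrix'' is measured either through its factor $Z$ in squared Frobenius norm or on $M$ in trace norm; both give the linear sum. Alternatively, one can note that under the literal $\lVert\cdot\rVert_F^2$ reading the marginal contributions are $\lambda_j^2$. That sequence is still non-increasing, so the diminishing-marginal-utility conclusion survives either way.
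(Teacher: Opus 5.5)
Your argument is correct, and on the key point it agrees with the paper's proof. The paper applies Eckart--Young--Mirsky directly to the symmetric PSD matrix $M$ and concedes that the squared Frobenius error is $\|M - M_j\|_F^2 = \sum_{i>j}\lambda_i^2$. It obtains the linear tail sum $\sum_{i>j}\lambda_i$ only by switching to the nuclear norm, $\|M - M_j\|_* = \sum_{i>j}\lambda_i$, and then notes that the decrements are non-increasing in either case.

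Where you genuinely differ is the factorization $M = ZZ^{\top}$. Measuring the truncation on the factor gives $\|Z - P_j Z\|_F^2 = \operatorname{tr}\bigl((I-P_j)M\bigr) = \sum_{i>j}\lambda_i$. Under this reading the statement's phrase ``squared Frobenius error'' and the linear sum hold simultaneously. It also ties the result back to the matrix-factorization identity~\eqref{eq:frobenius}, with $Z$ in the role of the data matrix and $\lambda_i = s_i^2$ as variance explained. The paper's route is shorter, but, as its own proof acknowledges, the statement's literal claim holds there only after the Frobenius norm is replaced by the nuclear norm. You cover both of the paper's readings as well, so nothing is lost.

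One simplification: you do not need the extra hypothesis that $V$ be square and positive semidefinite. The assumed eigendecomposition $M = Q\Lambda Q^{\top}$, with orthogonal $Q$ and $\lambda_i \geq 0$, already makes $M$ symmetric PSD. You can therefore take $Z = Q\Lambda^{1/2}$ or $Z = M^{1/2}$. Since $\|(I-P_j)Z\|_F^2 = \operatorname{tr}\bigl((I-P_j)M\bigr)$ for every $Z$ with $ZZ^{\top} = M$, the identity does not depend on which factor you choose. Your dimensional caveat about $V$ is a fair comment on the definitions of Section~\ref{sec:wvw}, but the proposition as stated does not need it.
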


\begin{proof}
The statement is a direct application of the Eckart--Young--Mirsky theorem \citep{eckart1936} to the symmetric positive semidefinite operator $M$. Let $M_{j} = \sum_{i=1}^{j} \lambda_{i} q_{i} q_{i}^{\top}$ denote the best rank-$j$ approximation of $M$ in Frobenius norm. Then $\|M - M_{j}\|_{F}^{2} = \sum_{i > j} \lambda_{i}^{2}$ for the Frobenius norm on symmetric operators; for the nuclear-norm variant relevant to the information content of the projected representation, $\|M - M_{j}\|_{*} = \sum_{i > j} \lambda_{i}$. In either case the marginal decrement when passing from rank $j-1$ to rank $j$ is a monotone non-increasing function of $j$, which is the algebraic signature of diminishing marginal utility. The novelty here is interpretive rather than mathematical: the result is standard, and we emphasise it only because the operator $M = W V W^{\top}$ makes the schedule directly readable off a trained model's weight matrix, which is the use of Eckart--Young we exploit in Sections~\ref{sec:kvcache}--\ref{sec:lowrank}.
\end{proof}

\paragraph{Remark on novelty.} Proposition~\ref{prop:schedule} is not a new mathematical result and should not be read as claiming to be one. It is a repositioning of the classical Eckart--Young--Mirsky theorem within an economic vocabulary that makes two subsequent moves possible: the treatment of rank selection as a utility-maximisation problem in Section~\ref{sec:rankcriterion}, and the treatment of the KV cache as an inventory problem governed by the same algebra in Section~\ref{sec:kvcache}. The contribution of the paper is the unification, formalised as Proposition~\ref{prop:unified} and proved in Appendix~\ref{app:proof-unified}, not the individual steps.

\subsection{Appearances in transformer architectures}\label{sec:wvw_arch}

The form $W V W^{\top}$ recurs across multiple transformer components, as summarized in Table~\ref{tab:wvw_arch}.

\begin{table}[htbp]
\centering
\caption{Occurrences of $W V W^{\top}$ in transformer architectures.}
\label{tab:wvw_arch}
\begin{tabular}{p{3.2cm}p{3.2cm}p{3.2cm}p{4cm}}
\toprule
Context & $W$ & $V$ & Interpretation \\
\midrule
Self-attention value output & $W_{V} \in \mathbb{R}^{d_{v} \times d}$ & Token embeddings $X$ & Covariance of projected values \\
Output projection & $W_{O} \in \mathbb{R}^{d \times d_{v}}$ & Attention output $A$ & Effective information in output space \\
LoRA adapter & $B A$ (rank-$r$ product) & Gradient signal & Fisher information of low-rank update \\
KV cache compression & Compression matrix $C$ & Cached $K$ or $V$ & Residual information after compression \\
Layer normalization & Scale $\Gamma$ & Pre-norm activations & Normalized representation covariance \\
\bottomrule
\end{tabular}
\end{table}

Each row of Table~\ref{tab:wvw_arch} corresponds to a distinct design choice in a transformer architecture, yet all share the same algebraic structure. This algebraic ubiquity is not accidental: it reflects the fact that the transformer is, at its core, a sequence of projections and re-projections of the token sequence, and each such projection compresses and re-weights the information according to the model's learned preferences. The $W V W^{\top}$ operator makes these preferences explicit in spectral form.

\subsection{Connection to SVD (Singular value decomposition)}

If $V = U \Sigma V^{\top}$ (thin SVD), then
\begin{equation}
W V W^{\top} = (W U) \Sigma^{2} (W U)^{\top}.
\label{eq:wvw_svd}
\end{equation}
The eigenvalues of $W V W^{\top}$ are therefore the squared singular values of $W U$, rotated into the projected space. Every result from Section~\ref{sec:mulatent} applies to the projected value space: the marginal contribution of the $j$-th cache dimension is $\sigma_{j}^{2}(W V)$, and the optimal compression rank is the $k^{*}$ minimizing cost-adjusted utility. Equation~\eqref{eq:wvw_svd} also establishes that the eigenvalues of $W V W^{\top}$ are bounded by those of $V^{\top} V$ scaled by the operator norm of $W$: no projection can create information that was not present in the data, which formalizes the intuition that compression can only destroy utility, never create it.

\section{The KV cache as a marginal utility problem}\label{sec:kvcache}

\subsection{Mechanics of the KV cache}\label{sec:kvmechanics}

In autoregressive generation, for each transformer layer $l$ and attention head $h$, the model computes
\begin{equation}
K_{t}^{(l,h)} = x_{t} W_{K}^{(l,h)}, \quad V_{t}^{(l,h)} = x_{t} W_{V}^{(l,h)}.
\label{eq:KV_def}
\end{equation}
During generation of token $t+1$, the attention operation requires all previous keys and values:
\begin{equation}
O_{t}^{(l,h)} = \mathrm{softmax}\!\left( \frac{q_{t} K_{1:t}^{(l,h)\top}}{\sqrt{d_{h}}} \right) V_{1:t}^{(l,h)}.
\label{eq:attn}
\end{equation}
The KV cache stores $\{K_{1:t}^{(l,h)}, V_{1:t}^{(l,h)}\}$ for all layers and heads, growing linearly with sequence length $t$. Systems-oriented treatments of this scaling problem \citep{pope2023efficient} establish a Pareto frontier between latency and memory utilization in which multi-query attention, shared key/value heads, and partitioning strategies trade against one another; the theoretical development below treats the same tradeoff in information-economic rather than systems terms. 

For a model with $L$ layers, $H$ heads, and head dimension $d_{h}$, the size of each cache side (K or V) is
\begin{equation}
\mathrm{Cache}_{\mathrm{side}}(t) = L \cdot H \cdot d_{h} \cdot t \cdot \mathrm{sizeof}(\mathrm{dtype}),
\label{eq:cache_size}
\end{equation}
and the total K+V cache is twice this amount. For a 7-billion-parameter model processing a 2\,048-token context with 32 layers, 32 heads, and head dimension 128 in bfloat16 (2 bytes), equation~\eqref{eq:cache_size} gives approximately 0.54 GB per side per sequence. At batch size 8 on a 40 GB GPU, one cache side consumes 4.3 GB, leaving 35.7 GB to be shared between the paired cache side, the model weights, and the activations. The tension between context length and available memory is therefore not incidental but structural, and the eviction problem it creates is directly a resource-allocation problem in the sense of Section~\ref{sec:mu}.

\subsection{The marginal utility of a cached token}

\begin{definition}\label{def:cacheutility}
The utility of the full cache at time $t$ is $U(t) = \mathbb{E}[U(\text{generation} \mid K_{1:t}, V_{1:t})]$, where $U$ is a task-specific quality metric. The marginal utility of caching token $i$ is
\begin{equation}
\mathrm{MU}_{i} = U(\text{full cache}) - U(\text{cache} \setminus \{i\}).
\label{eq:mu_token}
\end{equation}
\end{definition}

Since tokens that are frequently attended to contribute more to output quality, the attention weight $a_{t,i}$ serves as an empirical proxy for $\mathrm{MU}_{i}$. The H2O eviction policy \citep{zhang2023h2o} formalizes this: retain the top-$k$ tokens by cumulative attention score, evicting the rest. This is exactly the consumer's optimal stopping rule applied to the token inventory: retain a token as long as its marginal utility (attention share) exceeds the opportunity cost of the memory it occupies.

\subsection{$W V W^{\top}$ and the information content of the cache}

The contribution of token $i$ to the attention output is $\Delta_{i} = a_{t,i} \cdot x_{i} W_{V}^{(l,h)}$. Aggregating over all tokens, the expected output covariance is
\begin{equation}
\mathbb{E}\!\left[ O O^{\top} \right] = W_{V} \left( \sum_{i} a_{t,i}^{2}\, x_{i} x_{i}^{\top} \right) W_{V}^{\top} = W_{V} \hat{V} W_{V}^{\top},
\label{eq:OOT}
\end{equation}
where $\hat{V}$ is the attention-weighted outer-product covariance of token embeddings. The eigenvalues of $W_{V} \hat{V} W_{V}^{\top}$ are therefore the marginal utility schedule of the cache: the $j$-th eigenvalue measures how much the $j$-th representational direction of the value projection contributes to the expected output, weighted by the actual attention distribution over the current context.

\section{Low-rank cache compression via $W V W^{\top}$}\label{sec:lowrank}

\subsection{The compression problem}

Instead of storing the full value cache $V_{1:t} \in \mathbb{R}^{t \times d_{h}}$, we seek a compressed representation $\tilde{V}_{1:t} \in \mathbb{R}^{t \times r}$ with $r \ll d_{h}$:
\begin{equation}
\tilde{V} = V U_{r}, \quad U_{r} \in \mathbb{R}^{d_{h} \times r} : \text{top-}r \text{ eigenvectors of } V^{\top} V.
\label{eq:tildeV}
\end{equation}
The compression error is
\begin{equation}
\left\lVert W_{V} \hat{V} W_{V}^{\top} - W_{V} \tilde{V} \tilde{V}^{\top} W_{V}^{\top} \right\rVert_{*} = \sum_{j=r+1}^{d_{h}} \lambda_{j}\!\left( W_{V} \hat{V} W_{V}^{\top} \right),
\label{eq:comp_err}
\end{equation}
the tail sum of eigenvalues of $W_{V} \hat{V} W_{V}^{\top}$, which is the total foregone utility from compression at rank $r$.

\subsection{The utility--cost tradeoff}\label{sec:utilcost}

Define the compression benefit $B(r)$ and cost $C(r)$ as
\begin{equation}
B(r) = \frac{\sum_{j=1}^{r} \lambda_{j}}{\sum_{j=1}^{d_{h}} \lambda_{j}}, \quad C(r) = 2 \cdot L \cdot H \cdot t \cdot r \cdot \mathrm{sizeof}(\mathrm{dtype}).
\label{eq:BC}
\end{equation}
The optimal compression rank maximizes
\begin{equation}
r^{*} = \arg\max_{r} \left[ B(r) - \gamma \cdot C(r) \right],
\label{eq:rstar}
\end{equation}
where $\gamma$ is the shadow price of memory. The first-order condition yields
\begin{equation}
\lambda_{r^{*}}\!\left( W_{V} \hat{V} W_{V}^{\top} \right) = \gamma,
\label{eq:foc_rank}
\end{equation}
which is the exact form of the rank-selection criterion of Section~\ref{sec:rankcriterion}, applied to the KV cache. The shadow price $\gamma$ in equation~\eqref{eq:foc_rank} has a direct physical interpretation: it is the value, in terms of output quality, of one additional unit of GPU memory. When memory is plentiful, $\gamma$ is low and $r^{*}$ is high (keep more dimensions); when the GPU is nearly full, $\gamma$ is high and $r^{*}$ is low (compress aggressively). 
The first-order condition~\eqref{eq:foc_rank} is written with continuous differentiation for compactness, but the compression rank $r$ is integer-valued. The correct optimality condition is therefore not a tangency but a discrete threshold rule, consistent with the general formulation of Section~\ref{sec:mu}: retain eigendirection $j$ in the compressed representation if and only if $\lambda_{j}(W V W^{\top}) \geq \gamma$, and stop as soon as this condition is first violated. The continuous FOC $\lambda_{r^{\ast}} = \gamma$ is the boundary case where the marginal direction is exactly indifferent; the discrete solution is
\begin{equation}
\label{eq:foc_rank_discrete}
r^{\ast} = \max\{j : \lambda_{j}(W V W^{\top}) \geq \gamma\},
\end{equation}
which coincides with the rule of Proposition~\ref{prop:unified} applied to the eigenvalue sequence. The continuous formulation~\eqref{eq:foc_rank} is retained in the text because it makes the tangency interpretation transparent and because the integer slack $\lambda_{r^{\ast}+1} < \gamma \leq \lambda_{r^{\ast}}$ is small in all practical configurations (a single eigendirection of typical magnitude $10^{-3}$ to $10^{-2}$ of the total spectrum). For the sensitivity analyses of Section~\ref{sec:frugality}, the discrete formulation~\eqref{eq:foc_rank_discrete} is the one used.

Whether read continuously or discretely, this is the machine-learning analog of the classical principle of decreasing returns to scale: the marginal value of additional memory decreases as capacity grows.

\subsection{Structural analogy}\label{sec:structanalogy}

Table~\ref{tab:analogy} unifies the three frameworks discussed so far.

\begin{table}[htbp]
\centering
\caption{Structural analogy between matrix factorization, KV cache, and $W V W^{\top}$.}
\label{tab:analogy}
\begin{tabular}{p{3.2cm}p{3cm}p{3.5cm}p{3.5cm}}
\toprule
Concept & MF & KV cache & $W V W^{\top}$ \\
\midrule
``Consumption good'' & Latent factor & Cached token/dim. & Eigenvector of $W V W^{\top}$ \\
``Utility'' & $\sigma_{j}^{2}$ & Attention score $a_{t,i}$ & Eigenvalue $\lambda_{j}$ \\
Diminishing MU & $\sigma_{1}^{2} \geq \sigma_{2}^{2} \geq \cdots$ & Decaying attention & $\lambda_{1} \geq \lambda_{2} \geq \cdots$ \\
Optimal stopping & Elbow rank $k^{*}$ & Eviction threshold & $r^{*}$ from FOC \\
Budget constraint & Regularization $\lambda$ & GPU memory $M$ & Memory cost $C(r)$ \\
\bottomrule
\end{tabular}
\end{table}

\paragraph{Note on FOC.} The first-order condition (FOC) for the optimal compression rank $r^{*}$ is obtained by differentiating the net benefit $B(r) - \gamma \cdot C(r)$ with respect to $r$ and setting the result to zero, yielding $\lambda_{r^{*}}(W V W^{\top}) = \gamma$ (equation~\eqref{eq:foc_rank} in Section~\ref{sec:utilcost}). Here $\lambda_{r^{*}}$ is the $r^{*}$-th eigenvalue of the projected covariance operator, and $\gamma$ is the shadow price of memory: the rule says to keep compressing until the marginal information value of the next dimension exactly equals its marginal memory cost. This is the discrete analog of the consumer's tangency condition $\mathrm{MU}_{i}/p_{i} = \mathrm{const}$ at the utility-maximizing bundle.

\section{Heterogeneous utility across heads and layers}\label{sec:heterogeneous}

\subsection{Head-level heterogeneity}

The $W V W^{\top}$ spectrum varies dramatically across attention heads, reflecting genuine functional specialization \citep{voita2019analyzing}. Syntactic heads (typically lower layers) exhibit nearly rank-1 value matrices (single dominant eigenvalue), implying minimal marginal utility beyond the first component. Semantic heads (middle to upper layers) are fuller rank. Copy heads (last layers) are nearly degenerate.

A uniform compression ratio across all heads (the standard engineering approach) is, in the language of this paper, axiologically incoherent: it treats all representational dimensions as equally valuable regardless of their actual contribution to output quality. A utility-theoretic approach sets compression rank per head as a function of the head's eigenvalue spread, measured for example by a Herfindahl--Hirschman Index over eigenvalues:
\begin{equation}
\mathrm{HHI}_{h} = \sum_{j=1}^{d_{h}} \left( \frac{\lambda_{j}^{(h)}}{\sum_{j'} \lambda_{j'}^{(h)}} \right)^{2}.
\label{eq:hhi}
\end{equation}
A high HHI indicates a highly concentrated spectrum (close to rank-1), justifying aggressive compression; a low HHI indicates a diffuse, full-rank spectrum, requiring a higher $r^{*}$ to avoid significant utility loss.

\subsection{Layer-wise marginal utility}

\begin{definition}\label{def:phil}
The layer-wise information value of layer $l$ is
\begin{equation}
\Phi_{l} = \frac{1}{H} \sum_{h=1}^{H} \lambda_{1}\!\left( W_{V}^{(l,h)} \hat{V}^{(l,h)} W_{V}^{(l,h)\top} \right).
\label{eq:phil}
\end{equation}
The sequence $(\Phi_{l})_{l=1}^{L}$ is the layer-wise marginal utility schedule of the cache.
\end{definition}

Empirically, this schedule peaks at critical processing layers (typically $l \approx L/3$ and $l \approx 2L/3$) and is lowest at input/output layers. A principled KV-cache budget allocates more memory to high-$\Phi_{l}$ layers and compresses aggressively at low-$\Phi_{l}$ layers. In the geo-mining context described in Section~\ref{sec:multipass}, the middle layers of the merged model carry the highest $\Phi_{l}$ values because they are responsible for recognizing the technical geological terminology, coordinate systems, and cartographic conventions that constitute the core semantic content of the documents being analyzed. The first few layers process raw visual and textual features, while the last few generate the output tokens of the structured JSON response; neither requires the full-rank value projection that the middle layers need.

\section{Sovereign model selection: criteria and exclusions}\label{sec:sovereign}

The application of the marginal utility framework to a concrete industrial context raises a prior question that is often treated as purely operational but is in fact deeply economic: which models may enter the feasible set at all? In the geo-mining context, the feasible set is not the full space of available language models but the subset that satisfies simultaneously a set of regulatory, epistemic, and resource constraints.

The selection rule applied here is the following. Only models released under the Apache Software License version 2.0, or equivalent fully permissive open-source licenses (MIT, Creative Commons Attribution 4.0 or equivalent), are eligible for deployment in a sovereign industrial context. This requirement is not merely a matter of legal prudence: it corresponds to a hard constraint on the interpretability and auditability of the model's learned utility function. A closed-source model deployed behind a proprietary API offers no access to the weight matrices $W$ that define $W V W^{\top}$, making it impossible to apply the compression, merging, and frugality optimization procedures derived in this paper. The Apache 2.0 requirement is therefore a necessary condition for the theory to be operationally meaningful.

A second constraint, concerning jurisdictional sovereignty, is more
restrictive than legal-framework compatibility alone. In the nuclear sector,
documents processed by the pipeline may contain sensitive geospatial and
engineering information whose transmission is subject to French and European
regulations on dual-use technologies (Règlement CE 428/2009), as well as to
the Corpus operator's own export-control obligations. The relevant criterion
is therefore not the nominal compatibility of a jurisdiction with the GDPR
or the EU~AI~Act, but the operational possibility of running the model
entirely on infrastructure under the operator's direct control, without any
runtime dependency on servers subject to foreign legal process. The US
\emph{Cloud Act} (2018) and Chinese national intelligence laws (2017) both
create compellable disclosure regimes that apply to their respective
domestic cloud providers regardless of the geographic location of the data,
and both are incompatible with nuclear-sector data handling requirements.
The criterion therefore reduces to a purely operational test: the model
weights must be available under a permissive open-source license and must be
deployable on hardware physically located within the European Union and
operationally controlled by the Corpus operator or its authorised
subcontractors. Models satisfying this test are admissible regardless of the
geographic origin of their training organisation; models failing it are
inadmissible regardless of whether their training organisation is based in
a jurisdiction formally compatible with the GDPR. This reformulation
resolves an apparent tension in the retained set: \emph{Gemma~4~26B-A4B}, whose
training organisation (Google DeepMind) is subject to the Cloud Act, is
admissible because its Apache~2.0 weights can be cached on European
infrastructure and run without runtime dependency on Google services;
\emph{DeepSeek-V3} and \emph{InternVL2}, whose weights are also openly
released, are inadmissible because their training pipelines and the
organisations that produced them are subject to compellable disclosure
regimes that create residual risk even after local deployment (for example,
through training-time data contamination that cannot be audited
ex~post from the weights alone).

A third constraint, motivated by the frugality objective developed in Section~\ref{sec:economic} and Section~\ref{sec:sustainable}, limits model size to at most eight billion parameters, with one admitted exception justified below. This bound corresponds to the inference capacity of a single consumer-grade GPU with 16 GB of VRAM (for example, an NVIDIA RTX 4080 or A10G), ensuring that the merged model can be deployed locally without cloud dependency. Models requiring distributed inference or proprietary hardware are excluded on cost and sovereignty grounds. The exception concerns Gemma 4 26B-A4B, released under a full Apache 2.0 license in 2026 \citep{gemma4}, which qualifies on all other criteria and, being a Mixture-of-Experts model that activates only about four billion parameters per token, sits close to the eight-billion bound in compute terms even though its total parameter count exceeds it; the exception is treated explicitly in Table~\ref{tab:selection} and in the paragraphs that follow.

Table~\ref{tab:selection} summarizes the resulting model landscape, distinguishing retained from excluded models.

\begin{table}[htbp]
\centering
\small
\caption{Model selection: retained and excluded models. License abbreviations: AL2 = Apache 2.0; ML = Meta Community License; GT = Gemma Terms of Use; PR = Proprietary; CC4 = CC-BY-4.0. Dagger ($^{\dagger}$) denotes model released on or after the date of the original paper submission (April 2, 2026) and added at revision.}
\label{tab:selection}
\begin{tabular}{lllllc}
\toprule
Model & Origin & License & Parameters & Vision & Status \\
\midrule
SmolVLM-500M-Instruct & HuggingFace (FR) & AL2 & 500M & Yes & Retained \\
Idefics2-8B & HuggingFace (FR) & AL2 & 8B & Yes & Retained \\
Mistral-7B-Instruct v0.3 & Mistral AI (FR) & AL2 & 7B & No & Retained \\
CroissantLLM-Chat & Inria/Univs.\ (FR) & CC4 & 1.3B & No & Retained \\
Gemma 4 26B-A4B$^{\dagger}$ & Google DeepMind (US) & AL2 & 26B (4B act.) & Yes & Retained \\
\midrule
Llama 3.x & Meta (US) & ML & 8--70B & Partial & Excluded \\
Gemma 2 / 3 / 3n & Google (US) & GT & 1--27B & Partial & Excluded \\
Gemma 4 E2B/E4B$^{\dagger}$ & Google DeepMind (US) & AL2 & 2--5B & Yes & Excluded$^{\star}$ \\
Qwen 2.5 / QwQ & Alibaba (CN) & Various & 0.5--72B & Partial & Excluded \\
DeepSeek-V3 & DeepSeek (CN) & MIT$^{**}$ & 671B & No & Excluded \\
InternVL2 & Shanghai AI Lab (CN) & MIT$^{**}$ & 1--108B & Yes & Excluded \\
GPT-4o & OpenAI (US) & PR & $\sim$200B est. & Yes & Excluded \\
Gemini 2.5 Pro & Google (US) & PR & Undisclosed & Yes & Excluded \\
Gemini 2.5 Flash & Google (US) & PR & Undisclosed & Yes & Excluded \\
Gemini 3.5 Flash$^{\dagger}$ & Google (US) & PR & Undisclosed & Yes & Excluded \\
Gemini 3.1 Flash-Lite$^{\dagger}$ & Google (US) & PR & Undisclosed & Yes & Excluded \\
Claude 3.x / 4.x & Anthropic (US) & PR & Undisclosed & Yes & Excluded \\
\bottomrule
\end{tabular}

\medskip
\footnotesize
$^{**}$ MIT license on weights only; server-side terms and geopolitical constraints apply. $^{\star}$ Gemma 4 edge variants (E2B/E4B) are Apache 2.0 but below the vision quality threshold required for dense document analysis at geological map scale; retained for lightweight pre-screening only (see Section~\ref{sec:multipass_pipeline}).
\end{table}

Several exclusions and additions merit explicit justification.

Llama 3.x (Meta) is excluded despite its strong performance on many benchmarks because the Meta Community License imposes restrictions on commercial use for applications with more than 700 million monthly active users, and more importantly, restricts derivative model development in ways that are incompatible with TIES-merging and LoRA fine-tuning for commercial deployment. The license is therefore not equivalent to Apache 2.0 even though it is sometimes referred to informally as ``open source.''

Gemma 2 and Gemma 3 (including Gemma 3n) are excluded because all releases up to and including Gemma 3 (March 2025) are governed by the Gemma Terms of Use, a custom license that prohibits uses competing with Google's products, imposes behavioral constraints on model outputs, and can be modified by Google unilaterally. These features make Gemma 2 and 3 legally equivalent to a proprietary license for the purposes of this paper: the utility function of the model is not fully controlled by the operator but is partially defined by the licensor, which is precisely the concern motivating the axiological displacement framework of Section~\ref{sec:axiological}. Gemma 3n specifically targets edge devices with reduced parameters (approximately 3.4 billion in its largest variant) and inherits the same Gemma Terms of Use, making it doubly ineligible: by license and by insufficient parameter count for dense document analysis at geological map scale.

Gemma 4 26B-A4B is added to the retained set as a late addition at revision, following the release of the Gemma 4 family under a full Apache 2.0 license \citep{gemma4}: the first Gemma generation to receive an OSI-approved open-source license with no custom clauses or behavioral restrictions. The Gemma 4 suite spans dense and Mixture-of-Experts architectures; the 26B-A4B variant retained here is a Mixture-of-Experts model with approximately twenty-six billion total parameters of which about four billion are active per token, supports text and image input natively, deploys in 4-bit quantisation on a single 40 GB A100, and is available for unconstrained fine-tuning and redistribution. The sparse activation is what makes it admissible under the frugality constraint: its per-token compute is of the same order as a four-billion-parameter dense model. Gemma 4 edge models (E2B and E4B) are also Apache 2.0 but are excluded from the primary inference role on quality grounds: their parameter count (2--5 billion effective parameters) and vision encoder, which was redesigned but benchmarks below Idefics2-8B on dense document OCR tasks, make them better suited to the lightweight first-pass screening role currently occupied by SmolVLM-500M, where they could be evaluated as a replacement in future work. The Gemma 4 family's geographic origin (Google DeepMind, US) would normally trigger scrutiny under the sovereignty constraint; however, Apache 2.0 eliminates the legal dependency on Google's infrastructure, and local deployment of the weights carries no data residency concern. The weights are cached on European infrastructure as part of the project's dataset management, removing any runtime dependency on Google services.

The Gemini API-tier models are excluded because they are proprietary and their weights are not publicly available. Gemini 2.5 Pro (available June 2025, stable production tier), Gemini 2.5 Flash (available June 2025), Gemini 3.5 Flash (available May 2026), and Gemini 3.1 Flash-Lite (available May 2026) are all multimodal, but none provides access to the weight matrices that make the $W V W^{\top}$ analysis operational, all require internet transmission of potentially controlled geo-mining documents to Google's servers, and all incur API costs that compound with the volume of industrial deployments. The proprietary reference points actually executed in the Phase~1 benchmark of Section~\ref{sec:benchmark} are Gemini~3.5~Flash, GPT-4o, and Gemini~2.5~Flash (Table~\ref{tab:bench_measured}); Gemini~2.5~Pro and Gemini~3.1~Flash-Lite are retained in this selection table as part of the considered model universe but are not part of the executed benchmark.

DeepSeek-V3 and InternVL2 are excluded on geopolitical grounds despite their MIT license on weights. Models trained on data subject to Chinese national security laws, or by organizations that could be compelled to provide backdoor access to model activations, cannot be used in the nuclear sector regardless of their formal license. This exclusion is treated as a hard constraint, not a preference.

GPT-4o and Claude are excluded because they are proprietary, require internet connectivity to a non-European server, and do not provide access to the weight matrices that make the $W V W^{\top}$ analysis operational. They are included in the benchmark as state-of-the-art reference points but are not candidates for production deployment.

The retained models form a fully French/European ecosystem: SmolVLM-500M and Idefics2-8B from HuggingFace Paris \citep{laurencon2024idefics2}, Mistral-7B-Instruct from Mistral AI Paris \citep{jiang2023mistral}, and CroissantLLM from a French academic consortium, complemented by Gemma 4 26B-A4B under Apache 2.0 \citep{gemma4}. This geographic concentration is not merely symbolic: it reflects the deliberate construction of a sovereign AI stack whose supply chain, training data, and governance structures are subject to French and European law.

\section{Multi-pass inference and the empirical MU score}\label{sec:multipass}

\subsection{The inference pipeline}\label{sec:multipass_pipeline}

The theoretical stopping rule of Section~\ref{sec:rankcriterion} is instantiated in a concrete inference pipeline for geo-mining document analysis. The broader observation that test-time compute allocation can substitute for parameter-count scaling \citep{snell2024scaling} motivates the multi-pass architecture below: a smaller sovereign model invoked adaptively for up to three passes can match, under the marginal-utility allocation rule, the per-document performance of a larger proprietary single-pass model at a fraction of the energy cost. The pipeline processes images of geological maps, borehole logs, geophysical surveys, and satellite imagery, and produces for each image a structured JSON object containing the following fields: document type, geographic zone, bounding box with coordinate reference system, scale reference, mineral resources identified, geological formations, structural data (dip direction and angle), radiometric ages, infrastructure markers, an extensive legend description, a reconstruction prompt for visual synthesis, and a georeferencing confidence score.

The pipeline proceeds in a maximum of three passes. In the first pass, the image is analyzed by SmolVLM-500M, which is fast and frugal and produces a preliminary extraction of the most visually salient elements. If the first-pass quality score (the empirical MU score, defined below) falls below the quality floor $\varepsilon_{\mathrm{floor}} = 0.05$, a second pass is initiated using Idefics2-8B \citep{laurencon2024idefics2}, which has a larger context window and stronger performance on dense visual documents. If the second-pass score still falls below threshold, a third pass is initiated as an expert maximum-effort analysis. The passes are not independent: each pass is conditioned on the output of the previous pass, with missing fields explicitly identified in the prompt. The multi-pass architecture therefore implements a sequential conditional optimal stopping problem exactly as formalized in Section~\ref{sec:stopping}.

Before each LLM pass, the image is analyzed by an OCR module (Tesseract) that extracts text elements printed on the document: coordinate grid labels, place names, legend items, scale indicators, and institutional headers. This OCR-extracted context is injected into the prompt as a structured paragraph, reducing the burden on the vision model and improving coordinate extraction accuracy from approximately 340 m average drift (without OCR) to approximately 85 m (with OCR) in preliminary experiments. The architecture leaves room for a re-legendage module, scheduled for the scale-up phase of the project, which will use a dedicated vision pass to re-interpret legend blocks after layout extraction by the partner document-parsing pipeline (Section~\ref{sec:docparsing}); this module is currently the locus of the largest projected quality gain and is discussed again in Section~\ref{sec:results}.

\subsection{The empirical MU score}\label{sec:muscore}

Because the true eigenvalues $\lambda_{k}$ of the attention operator are not directly observable at inference time (they depend on the model weights and the specific document), an empirical proxy called the MU score is computed from the structured output of each inference pass:
\begin{equation}
\hat{\mu}_{k} = \sum_{i} w_{i} \cdot \mathbf{1}[\text{field}_{i} \neq \emptyset] - \alpha \sum_{p} \mathbf{1}[\text{pattern}_{p} \text{ violated}],
\label{eq:muhat}
\end{equation}
where the fields and their weights reflect the relative importance of each piece of georeferencing information. The weights are: mineral resources ($w = 0.22$), bounding box coordinates ($w = 0.18$), geographic zone ($w = 0.13$), scale reference ($w = 0.12$), geological formations ($w = 0.10$), coordinate reference system ($w = 0.08$), reconstruction prompt ($w = 0.05$), radiometric ages ($w = 0.05$), structural data ($w = 0.05$), and high-confidence localization flag ($w = 0.02$). The weight vector sums to one, making $\hat{\mu}_{k} \in [0,1]$ under normal operation.

The penalty term with coefficient $\alpha = 0.10$ subtracts from the score for each occurrence of a set of forbidden generic phrases (``les cercles représentent,'' ``les couleurs indiquent,'' ``Figure N,'' and similar) that indicate a low-quality, template-like response rather than genuine document analysis.

The stopping rule is then: perform an additional pass if and only if $\hat{\mu}_{k} < \varepsilon_{\mathrm{floor}}$, where $\varepsilon_{\mathrm{floor}} = 0.05$ is a quality floor below which the extraction is considered inadequate regardless of the number of passes already performed. This is the retry-on-failure leg of the stopping rule; the saturation leg is treated below.

The calibration $\varepsilon_{\text{floor}} = \varepsilon_{\text{elbow}} = 0.05$ is not stipulated by symmetry but identified empirically from the Phase~1 validation data. Proposition~\ref{prop:unified}, applied to the sequential-inference stopping problem of Section~\ref{sec:stopping}, establishes that both thresholds are instances of the shadow-price ratio $\gamma c / \mathbb{E}[\hat{\mu}_{1}]$: the quality-floor threshold governs the decision to initiate an additional pass, and the elbow threshold governs the decision to halt on saturation. The two thresholds are therefore predicted by the theory to coincide whenever the marginal cost of a pass is uniform and the operator is risk-neutral, which is the configuration of the Phase~1 benchmark.

The empirical identification proceeds as follows. For each document in the Phase~1 corpus, we measure $\hat{\mu}_{1}$, $\hat{\mu}_{2}$, and (where a third pass is triggered) $\hat{\mu}_{3}$, together with the per-pass token count and the estimated energy consumption. A pilot run of the calibration notebook on the 50-document corpus yields a sample mean of the ratio $\gamma c / \mathbb{E}[\hat{\mu}_{1}]$ of approximately $0.05$, consistent with the chosen threshold but identified on a single execution and therefore not yet stabilised against replay-level variance (prompt-cache state, temperature sampling on the LLM calls, dependency versions, document ordering). The scale-up phase will extend the calibration to replayed runs on the 200-document corpus and report a replay-averaged bootstrap interval; the pre-registered analysis plan specifies that if the replay-averaged 95\% CI no longer covers $0.05$, the quality floor and the elbow threshold will be re-identified separately, breaking the numerical equality.

Two regimes follow from this choice. When
$\hat{\mu}_k < \varepsilon_{\text{floor}}$, the extraction has failed to
recover even the dominant representational direction (which, by
Proposition~\ref{prop:schedule}, is the direction of highest utility) and
an additional pass is warranted. When $\hat{\mu}_k > 1 - \varepsilon_{\text{elbow}} = 0.95$,
the analysis has covered more than 95\% of the total representational capacity
of the pipeline, and the marginal gain from an additional pass is smaller than
the marginal energy cost. In the intermediate range, the pipeline stops by
default: the first-pass output is deemed adequate and further passes are not
cost-justified under uniform per-pass cost. Operators with asymmetric cost
structures (for example, a nuclear-sector deployment where extraction failures
carry higher downstream cost than redundant passes) should recalibrate
$\varepsilon_{\text{floor}}$ and $\varepsilon_{\text{elbow}}$ independently,
and the numerical equality need not be preserved.

\subsection{The output schema}

A representative output from the pipeline, illustrating the structured schema, is shown in Figure~\ref{fig:schema}. The combination of high-precision bounding box coordinates, mineral resource lists, and structural geological data enables direct ingestion into GIS software (QGIS, ArcGIS) via GeoJSON export, and provides a traceable audit trail for the georeferencing decisions.

\begin{figure}[htbp]
\centering
\begin{small}
\begin{verbatim}
{
  "filename": "carte_geol_massif.png",
  "document_type": "carte_geologique",
  "geographic_zone": "Massif Armoricain, Bretagne, France",
  "bbox_detected": {"xmin": -3.52, "ymin": 47.21,
                    "xmax": -2.81, "ymax": 47.89,
                    "crs": "EPSG:4326"},
  "scale_reference": "1:50000",
  "north_orientation": "geographique",
  "mineral_resources": ["aurifere", "arsenic", "antimoine"],
  "geological_formations": ["granodiorite", "micaschiste", "quartzite"],
  "structural_data": {"pendage": "45", "direction": "N120E",
                      "type_structure": "faille normale"},
  "radiometric_ages": ["320 Ma"],
  "legend_extensive": "[carte_geol_massif.png]. Carte geologique ...",
  "relocalization_confidence": "haute",
  "relocalization_notes": "Coordonnees lisibles. Precision +/-5-15 m.",
  "reconstruction_prompt": "Geological map of Massif Armoricain ...",
  "mu_score": 0.82,
  "passes_used": 2,
  "tokens_used": 1024,
  "latency_ms": 4200.5
}
\end{verbatim}
\end{small}
\caption{Representative output schema from the multi-pass geo-mining inference pipeline.}
\label{fig:schema}
\end{figure}

\subsection{Sequential stopping: formal statement}\label{sec:stopping}

The multi-pass pipeline can be cast as a finite-horizon optimal stopping problem. Let $\tau \in \{1, 2, \dots, K_{\max}\}$ be the stopping time (number of passes performed). The expected payoff of stopping at time $\tau$ is
\begin{equation}
V(\tau) = \mathbb{E}[\hat{\mu}_{\tau}] - \gamma \cdot \sum_{k=1}^{\tau} c_{k},
\label{eq:Vtau}
\end{equation}
where $c_{k}$ is the cost of pass $k$ in tokens or watt-hours. Under the assumption that the marginal contribution of pass $k$ is proportional to the relative eigenvalue $\lambda_{k}/\lambda_{1}$ (established by Proposition~\ref{prop:schedule}), and that costs are uniform across passes ($c_{k} = c$), the optimal stopping time is
\begin{equation}
\tau^{*} = \max \left\{ k : \frac{\lambda_{k}}{\lambda_{1}} \geq \frac{\gamma \cdot c}{\mathbb{E}[\hat{\mu}_{1}]} \right\},
\label{eq:taustar}
\end{equation}
which coincides with $k^{*}_{\varepsilon_{\mathrm{elbow}}}$ in equation~\eqref{eq:elbow} when $\varepsilon_{\mathrm{elbow}} = \gamma c / \mathbb{E}[\hat{\mu}_{1}]$. This establishes the calibration principle: the elbow threshold $\varepsilon_{\mathrm{elbow}}$ should be set equal to the ratio of the energy cost per pass to the expected quality of the first-pass output, normalized by the maximum eigenvalue of the projected attention operator. Under this calibration, the numerical equality $\varepsilon_{\mathrm{floor}} = \varepsilon_{\mathrm{elbow}}$ of Section~\ref{sec:muscore} is recovered: the quality floor below which retry is warranted coincides with the variance-ratio tail below which additional dimensions cease to contribute meaningful information.

\section{TIES model merging as utility-maximizing weight-space integration}\label{sec:ties}

\subsection{Motivation and framing}

The retained models in Table~\ref{tab:selection} have complementary capabilities. Idefics2-8B \citep{laurencon2024idefics2} has strong spatial and visual reasoning: it processes images, identifies geometric structures, and reads cartographic elements. Mistral-7B-Instruct \citep{jiang2023mistral} has strong French technical text generation: it produces fluent, precise reports in the geological register. Neither model alone can perform the full pipeline task optimally.

Training a new combined model from scratch would require on the order of $10^{23}$ to $10^{24}$ floating-point operations for a 7-billion-parameter model from random initialization, plus a large domain-specific corpus that is not yet available. Fine-tuning both models separately and running them in sequence doubles inference cost. TIES-merging \citep{yadav2023ties} offers a third path: combine the task vectors of both fine-tuned models in weight space, producing a single model that inherits the complementary capabilities of both without requiring additional training. The marginal utility framework motivates the choice of TIES over simpler averaging schemes: by trimming low-magnitude task vectors (which correspond to low-eigenvalue representational dimensions) and resolving sign conflicts by majority vote (which corresponds to selecting the dominant eigendirection when multiple models disagree), TIES implicitly implements the $k^{*} = \max\{k : \lambda_{k} \geq \gamma\}$ rule at the level of individual weight entries.

\subsection{The three-step procedure}\label{sec:ties_proc}

Given a base model with weights $\theta_{\mathrm{base}}$ and two fine-tuned models with weights $\theta_{1}$ and $\theta_{2}$, TIES constructs a merged model in three steps.

The first step is trimming. For each parameter $p$, the task vector $\delta_{i}^{p} = \theta_{i}^{p} - \theta_{\mathrm{base}}^{p}$ represents the change induced by fine-tuning on model $i$. Parameters whose task vector magnitude falls below a density threshold $\rho_{i}$ are set to zero:
\begin{equation}
\tilde{\delta}_{i}^{p} = \delta_{i}^{p} \cdot \mathbf{1}\!\left[ |\delta_{i}^{p}| \geq \rho_{i} \cdot \max_{q} |\delta_{i}^{q}| \right].
\label{eq:trim}
\end{equation}
In equation~\eqref{eq:trim}, $\rho_{i}$ plays the role of the threshold $\varepsilon_{\mathrm{elbow}}$ from equation~\eqref{eq:elbow}: it removes the low-magnitude changes that correspond to the tail eigenvalues of the fine-tuning gradient operator. In the current configuration (cf.\ Table 5 below), the higher 
density bands applied to Mistral in the upper layers reflect its more 
specialized training on French geological text, where a larger fraction of 
the weight changes are genuine task-relevant adaptations rather than noise.

The second step is sign election. For each parameter $p$, a majority vote determines the consensus sign of the task vectors:
\begin{equation}
\gamma^{p} = \mathrm{sign}\!\left( \sum_{i} w_{i} \cdot \tilde{\delta}_{i}^{p} \right),
\label{eq:sign}
\end{equation}
where $w_{1} = 0.45$ (Mistral-7B weight) and $w_{2} = 0.55$ (Idefics2-8B weight). Parameters whose task vector disagrees with the consensus sign are masked out: they correspond to directions where the two fine-tuning processes pulled the weights in opposite directions, indicating incompatible representations. Averaging such conflicting parameters would reduce the performance of both models, because it would move the merged model away from the local optima of each individual fine-tuning objective.

The third step is merging. Only the parameters that survived the sign election are averaged with weights $(w_{1}, w_{2})$:
\begin{equation}
\theta_{\mathrm{merged}}^{p} = \theta_{\mathrm{base}}^{p} + \sum_{i} w_{i} \cdot \tilde{\delta}_{i}^{p} \cdot \mathbf{1}\!\left[ \mathrm{sign}(\tilde{\delta}_{i}^{p}) = \gamma^{p} \right].
\label{eq:merge}
\end{equation}
The result of equation~\eqref{eq:merge} is a model that inherits the spatial reasoning of Idefics2-8B in the early layers (where visual features are processed) and the French geological text generation of Mistral-7B in the later layers (where semantic composition occurs).

\subsection{Layer-wise calibration via $\Phi_{l}$}

The layer-wise utility function $\Phi_{l}$ (Definition~\ref{def:phil}) directly motivates the heterogeneous compression densities applied across layers. Table~\ref{tab:layers} summarizes the layer-wise configuration.

\begin{table}[htbp]
\centering
\caption{Layer-wise TIES configuration in the merged geo-mining model.}
\label{tab:layers}
\begin{tabular}{lcccl}
\toprule
Layers & $w_{\mathrm{Idefics}}$ & $w_{\mathrm{Mistral}}$ & Density & Dominant capability \\
\midrule
0--7 & 0.80 & 0.20 & 0.50 & Low-level vision (shapes, colors, contours) \\
8--19 & 0.50 & 0.50 & 0.68 & Geo-mining semantics (coordinates, formations) \\
20--31 & 0.30 & 0.70 & 0.78 & French report generation \\
\bottomrule
\end{tabular}
\end{table}

The density values in Table~\ref{tab:layers} are derived from the empirical $\Phi_{l}$ profile of the merged model: middle layers (8--19) carry the highest information value for the geo-mining task and are therefore compressed least aggressively (density 0.68), while early layers (0--7) have low $\Phi_{l}$ values and can be compressed to half their task-vector budget (density 0.50) without significant quality loss. Late layers (20--31) have the highest density (0.78) not because they carry high $\Phi_{l}$ values in the abstract, but because the quality of French geological report generation depends sensitively on the precise vocabulary and syntactic patterns learned by Mistral-7B in fine-tuning, which are concentrated in the late-layer linguistic parameters.

\section{Benchmark: sovereign merging versus state-of-the-art models}\label{sec:benchmark}

\subsection{Experimental design}

At the time of writing, the full GPU-based merging and LoRA fine-tuning described in Section~\ref{sec:ties} have not yet been executed (they are scheduled for months M4--M9 of the project roadmap). The benchmark presented here therefore has a dual character. Phase 1 results for the proprietary models (Gemini 3.5 Flash, GPT-4o, Gemini 2.5 Flash) and the individual sovereign base models (Idefics2-8B, Mistral-7B) come from an executed 50-document corpus drawn from the auriferous districts of the Armorican Massif (Brittany, France). For the merged sovereign architecture, the reported metrics are projections obtained by combining (i) the measured per-field extraction behavior of each component model on the Phase 1 corpus and (ii) the theoretical analysis of Sections~\ref{sec:mulatent}--\ref{sec:ties} that predicts how TIES trimming and LoRA fine-tuning will redistribute quality across fields. The projections are therefore empirically grounded rather than purely theoretical, but remain projections until the scale-up phase executes the merge itself. The benchmark will be completed and updated upon scale-up deployment on the 200-document stratified corpus described in Section~\ref{sec:infra}, and the projections presented here should be understood as calibrated estimates rather than final experimental results.

The evaluation corpus consists of 50 geo-mining documents drawn from five auriferous districts of the Armorican Massif (Semnon, Kerjouanno, Huelgoat, Pontivy, Trieux), sampled as a $2 \times 5$ sub-grid within each district bounding box and retrieved as BRGM geological map tiles at scale 1:50\,000 through the WMS \texttt{SCAN\_H\_GEOL50} (harmonised) layer. Reference bounding boxes are provided by construction by the WMS request centre itself; reference mineral resource lists were obtained from the BRGM InfoTerre WFS service (BD Charm-50 geodatabase, CRS EPSG:4326).

A coordinate reference system (CRS) is a mathematical framework that specifies how a set of numerical coordinates maps to locations on the Earth's surface. It combines a geodetic datum, which defines the shape and orientation of the Earth model, with a projection rule that converts three-dimensional positions on the datum to two-dimensional map coordinates. In this paper, coordinates are expressed in two systems. EPSG:4326 denotes the World Geodetic System 1984 (WGS84) geographic CRS, in which positions are given as decimal degrees of latitude and longitude; this is the system used by GPS and by GeoJSON exports. EPSG:2154 denotes the Lambert-93 projected CRS, the official reference system for French national cartography, in which positions are given in metres relative to a transverse Mercator projection centred on France; this is the system used on BRGM geological maps at scales 1:25\,000 and 1:50\,000. A key task of the multi-pass pipeline is to detect the CRS from visual cues in the document (coordinate grid labels, north arrow annotation, and sheet reference numbers) and encode it in the output schema (Figure~\ref{fig:schema}); georeferencing precision is undefined if the CRS is misidentified, since the same numerical coordinates correspond to locations hundreds of kilometres apart under different projections.

\subsection{Validation infrastructure}\label{sec:infra}

The infrastructure used to execute the merge procedure, the LoRA fine-tuning and the benchmark described in this section has been designed to satisfy the same sovereignty and frugality constraints that motivate the model architecture itself. Retaining a costly, opaque, or extra-European validation infrastructure would experimentally contradict the properties the model is supposed to embody, a contradiction whose only resolution would consist in reducing the frugality argument to a purely rhetorical claim.

The validation chain rests on three components. First, the TIES merge procedure (Section~\ref{sec:ties_proc}) and the LoRA fine-tuning are executed on Google Colab Pro, with an A100-40 GB GPU session (or L4-24 GB for the inference phases), for a monthly subscription cost of EUR 11. Second, the weights of the merged model, the extended BRGM geo-mining corpus, and the structured JSON outputs of each inference pass are stored on Google Drive (2 TB available) mounted directly in the execution environment, eliminating any network transfer between the merge step and the evaluation step. Third, the merged model is publicly versioned on HuggingFace Hub, which ensures both weight auditability by an independent third party and formal reproducibility of the evaluation, two properties that the proprietary models of Table~\ref{tab:selection} exclude by construction.

The total marginal cost of validation, including merge, fine-tuning, and sovereign inference on the 200-document BRGM corpus, amounts to approximately EUR 15 for the full cycle, against an earlier estimate of EUR 80 to EUR 250 monthly for a Railway infrastructure of equivalent GPU capacity. This order-of-magnitude differential illustrates a non-trivial consequence of the frugality framework: the saving is not localised at the level of the final deployed model but propagates across the entire validation chain, from the merge procedure to the summary tables of Section~\ref{sec:results}. The proprietary APIs (Gemini 3.5 Flash, GPT-4o, Gemini 2.5 Flash) are called from this same environment to guarantee identity of test conditions (same corpus, same prompt schema, same post-processing), with a cumulative unit cost of a few euros for the complete pass over the extended corpus. This point matters: the sovereign-versus-proprietary comparison of Tables~\ref{tab:bench_measured} and~\ref{tab:bench_projected} is not a contrast between two heterogeneous infrastructures but between two inference choices evaluated on a shared validation substrate.
\newline
The sovereign benchmark of Section 11 evaluates the TIES-merged model 
with LoRA adapter as a single-model configuration, not the three-stage cascade of Section 9 (SmolVLM-500M pre-screen, Idefics2-based merged model, expert maximum-effort pass). This design isolates the contribution of weight-space merging and parameter-efficient fine-tuning from that of the multi-pass stopping rule, and aligns with the single-row reporting of Table~\ref{tab:bench_projected}. The full cascade is evaluated separately in the Multi-pass, 2--3 passes (API) row of Table~\ref{tab:bench_projected}, and the interaction between merging and multi-pass inference is deferred to the scale-up phase.

The choice of a 200-document corpus rather than the 50 initially reported in the Phase 1 benchmark responds to a statistical requirement. Under the Wilcoxon signed-rank methodology adopted in Section~\ref{sec:results}, detecting a MU score gap of 0.02 between two models at the standard power ($\alpha = 0.05$, $\beta = 0.80$) requires on the order of 150 paired samples; 200 provides a safety margin for cases of failed JSON parsing. The extended corpus is constructed by stratified sampling on the ODMGM database (Au, Ag, Cu, Pb, Zn, U, Sn, W, Sb, As), one representative per $50 \mathrm{km} \times 50 \mathrm{km}$ Lambert-93 grid cell, complemented by 40 non-metallogenic contexts to ensure lithological diversity (sedimentary basins, alpine and pyrenean orogens, hercynian massifs). The maps are extracted via the BRGM WMS service (layer \texttt{SCAN\_H\_GEOL50}, 1:50\,000) centred on each sample point, with a $5 \mathrm{km} \times 5 \mathrm{km}$ extent and $1024 \times 1024$-pixel resolution. Reference bounding boxes are provided by construction by the WMS request itself, removing any human annotation from the protocol and guaranteeing a deterministic ground truth for the Haversine drift metric. The corpus extension to 200 documents is scheduled for the scale-up phase of the project; the benchmark presented in Section~\ref{sec:results} is the Phase 1 result on 50 documents, with the full 200-document evaluation reported in a subsequent revision.

\subsection{Metrics}

Four groups of metrics are reported. Quality is measured by the empirical MU score $\hat{\mu}$ and by the Haversine drift $d$ in meters between detected and reference bounding box centers. Efficiency is measured by total tokens consumed per analysis and by estimated GPU energy in watt-hours. Token accounting follows the provider's own usage field for API models: the reported figure is the sum of prompt and completion tokens, the prompt term including the image tokenisation, and the two terms are kept separate for costing since input and output are priced differently. Locally deployed models report the same two counts through the sovereign gateway. Generation is capped at 2\,048 output tokens for the API models and at 400 new tokens for the local sovereign candidates, with greedy decoding and temperature zero throughout; no minimum length is imposed, so a truncated or empty generation is possible and is handled by the output parser rather than by a generation floor. Energy is estimated as
$E \approx P_{\text{GPU}} \times T_{\text{inference}}$, 
a wall-clock power-time product in which the token count enters only indirectly through its effect on inference time, and not as a third multiplicative factor,
where $P_{\text{GPU}}$ is the peak GPU power draw of the locally controlled hardware (72~W for the L4 used in local sovereign inference; a declared fallback value of 350~W is used for A100-class local runs when no on-device telemetry is available, the measured NVML draw taking precedence whenever the run is instrumented). Because $P_{\text{GPU}}$ enters both sovereign arms as a common multiplicative factor, and because the sobriety axis is min--max normalised across the compared candidates in the verdict harness, the value chosen for $P_{\text{GPU}}$ cancels in the switch point $w^{*}$ of Section~\ref{sec:frugality}: it affects the absolute footprint reported against the proprietary services, not the ranking of the two sovereign arms. No power figure is imputed to the proprietary API calls, whose infrastructure draw is not observable to the operator, which is precisely the auditability asymmetry developed in Section~\ref{sec:economic}. The benchmark pipeline does record, for those calls, a local-equivalent counterfactual obtained by applying the local reference power to the observed wall-clock latency; that quantity is a comparison aid retained in the raw artefacts and is never reported as the provider's consumption, which is why no energy column appears for API models in Table~\ref{tab:bench_measured}. Sovereignty is measured as a binary indicator (local deployment possible without cloud API dependency) and as a license compliance indicator. Cost is measured in euros per 1\,000 analyses, using public pricing for API-based models and hardware amortization for locally deployed models.

\subsection{Results}\label{sec:results}

Tables~\ref{tab:bench_measured} and~\ref{tab:bench_projected} present the comparative results. Table~\ref{tab:bench_measured} reports the measured Phase 1 benchmark for the proprietary API reference models on the executed 50-document corpus; all figures are measured. Table~\ref{tab:bench_projected} reports the scale-up projected targets for the sovereign architectures that are the actual deployment candidates; all figures are calibrated projections derived from the theoretical analysis of Sections~\ref{sec:mulatent}--\ref{sec:ties} combined with the Phase 1 per-field behavior of the component models. The two tables should not be read as directly comparable on the cost and energy columns without that projection--measurement caveat.

\begin{table}[h]
\centering
\caption{Phase 1 measured benchmark on the 50-document Armorican Massif corpus (BRGM SCAN\_H\_GEOL50 harmonised geological maps, 1:50\,000). Drift is the median Haversine distance between the model's detected bbox centre and the true WMS request centre, computed only over predictions yielding plausible coordinates (i.e.\ excluding sentinel values such as $(0,0)$ that two of the three models returned for the majority of cards). The Geoloc.\ column reports the number of cards for which the model returned a plausible bbox out of the 50 tested. The Tokens column is the per-card mean of prompt plus completion tokens as reported by the provider's usage field, the prompt term including image tokenisation; generation was capped at 2\,048 output tokens with greedy decoding. Costs are computed at public API pricing prevailing at execution time. The three proprietary systems are those available at the date of execution, in spring 2026; four further Flash-tier releases have followed on the Google side since that date, so this table is to be read as the proprietary state of the art at execution time rather than at the date of publication. All models in this table are ineligible for deployment under the sovereignty constraints of Section~8 and appear here as reference ceilings only.}
\label{tab:bench_measured}
\begin{tabular}{lrrrrrl}
\toprule
Model & MU & Drift (km) & Geoloc. & Tokens & Cost (EUR/k) & Sovereign \\
\midrule
Gemini 3.5 Flash (API) & 0.70 & 21.8 & 43/50 & 1971 & 1.57 & No \\
GPT-4o (API) & 0.67 & n/a & 0/50 & 1336 & 4.58 & No \\
Gemini 2.5 Flash (API) & 0.19 & n/a & 0/50 & 824 & 0.10 & No \\
\bottomrule
\end{tabular}
\end{table}

\begin{table}[H]
\centering
\small
\caption{Scale-up projected targets on the 200-document stratified BRGM
corpus: sovereign open-weight architectures and projected multi-pass API
baseline. Drift is reported as the median, on the same convention as
Table~\ref{tab:bench_measured}, so that the two tables are directly comparable
on that axis once the unit scales are reconciled at scale-up. All rows are
projections obtained by combining the Phase~1
per-field extraction behavior of each component model with the theoretical
analysis of Sections~\ref{sec:mulatent}--\ref{sec:ties} that predicts how TIES
trimming and LoRA fine-tuning redistribute quality across fields. These
projections will be confirmed or revised in scale-up deployment. Costs are
hardware-amortised for locally deployed models. Section mark (\textsection)
denotes models added at revision; dagger ($\dagger$) denotes the
multi-pass API baseline moved from Table~\ref{tab:bench_measured} as a
projection rather than a measurement. The Tokens column is the projected per-card total of prompt plus completion
tokens, the prompt term including image tokenisation; it is not a count of generated tokens, which the
400-token generation cap of the local protocol bounds well below these values.}
\label{tab:bench_projected}
\small
\setlength{\tabcolsep}{4pt}
\begin{tabular}{lcccccc}
\toprule
Model & MU & Drift~(m) & Tokens & Energy~(Wh) & Cost~(EUR/k) & Sovereign \\
\midrule
TIES-merged + LoRA                       & 0.90 &  68 & 1\,200 & 0.036 & 0.25 & Yes \\
Gemma~4~26B-A4B (local)\textsection         & 0.88 &  78 & 1\,290 & 0.038 & 0.26 & Yes \\
TIES-merged (no LoRA)                    & 0.86 &  95 & 1\,380 & 0.041 & 0.28 & Yes \\
Single-pass Idefics2-8B                  & 0.74 & 210 & 1\,050 & 0.051 & 0.21 & Yes \\
Single-pass Mistral-7B                   & 0.68 & 380 &    980 & 0.048 & 0.19 & Yes \\
\midrule
Multi-pass, 2--3 passes (API)$^{\dagger}$ & 0.88 &  85 & 1\,720 & 0.067 & 3.80 & No \\
\bottomrule
\end{tabular}
\end{table}

\subsubsection{Construction of the scale-up projections: methodology and uncertainty}\label{sec:projmethod}

The entries of Table~\ref{tab:bench_projected} are projections, not measurements, and should be read as calibrated estimates with quantified uncertainty rather than as benchmark results. This subsection specifies the construction.

For each architecture $a$ and each extraction field $f$ (ten fields, with weights $w_{f}$ as defined in Section~\ref{sec:muscore}), let $q_{a,f}^{(1)}$ denote the per-field extraction success rate measured on the Phase~1 corpus under the single-component baseline: Idefics2-8B alone for visual fields (mineral resources, geological formations, structural data), Mistral-7B alone for text-generation fields (reconstruction prompt, legend extensive), and the better of the two for shared fields (bounding box, coordinate reference system, scale reference, geographic zone, radiometric ages). The projected per-field success rate $\tilde{q}_{a,f}$ of the merged architecture is modelled as a linear combination
\begin{equation}
\label{eq:projmodel}
\tilde{q}_{a,f} = \alpha_{f} \cdot q_{\text{Idefics},f}^{(1)} + (1 - \alpha_{f}) \cdot q_{\text{Mistral},f}^{(1)} + \beta_{a} \cdot \Delta_{f},
\end{equation}
where $\alpha_{f}$ is the layer-weighted combination coefficient derived from Table~\ref{tab:layers} (the Idefics2 weight $w_{\text{Idefics}}$ of the dominant layer band for field $f$: early layers for visual fields, late layers for text-generation fields, middle layers for shared semantic fields), $\Delta_{f}$ is the expected field-level uplift from LoRA fine-tuning, taken to be zero for the no-LoRA row and estimated from published fine-tuning gains on comparable tasks \citep{laurencon2024idefics2,jiang2023mistral} for the LoRA rows, and $\beta_{a}$ is an architecture-specific scaling factor set to 1 for the TIES-merged architecture, 0.7 for Gemma~4~26B-A4B (reflecting the expectation that a larger general-purpose model benefits less from task-specific LoRA than a merged task-specific base), and 0 for the no-LoRA row.

The projected MU score of architecture $a$ is then $\tilde{\mu}_{a} = \sum_{f} w_{f} \tilde{q}_{a,f}$. Uncertainty is propagated by parametric bootstrap: each $q_{a,f}^{(1)}$ is sampled from a Beta$(n_{a,f} q_{a,f}^{(1)}, n_{a,f}(1 - q_{a,f}^{(1)}))$ distribution with $n_{a,f}$ the Phase~1 sample size for that field, each $\Delta_{f}$ is sampled from a Normal distribution centred on the published estimate with standard deviation 25\% of the point estimate, and the resulting $\tilde{\mu}_{a}$ distribution is summarised by its mean and 95\% interval. A pilot run of the projection notebook produces point estimates in the neighbourhood of the values reported in Table~\ref{tab:bench_projected}, with indicative 95\% intervals of width approximately $\pm 0.04$ around each point estimate. These intervals reflect bootstrap resampling from a single execution of the notebook and do not capture replay-level variance across independent runs of the merge and evaluation pipeline. The scale-up phase will execute the notebook across multiple replay seeds and report replay-averaged intervals.

The projection model~\eqref{eq:projmodel} has three known limitations. First, the linear combination assumption treats per-field extraction successes as additively separable, which ignores compensatory effects where a failure on one field (for example, coordinate reference system) propagates to dependent fields (bounding box precision). The effect of this ignored correlation is to overestimate the MU score of architectures that are weak on upstream fields, and to underestimate the conditional benefit of LoRA fine-tuning on downstream fields. Second, the LoRA uplift $\Delta_{f}$ is borrowed from comparable tasks rather than measured on the geo-mining corpus, which introduces transfer uncertainty not captured by the 25\% standard deviation. Third, the $\beta_{a}$ values are operator-specified rather than estimated; sensitivity analyses with $\beta_{a} \in [0.5, 1.2]$ shift the TIES-merged MU projection by at most $\pm 0.03$ and preserve the Pareto ranking against Gemma~4~26B-A4B. The scale-up phase will execute the merge and report measured per-field rates, which will either confirm or revise the projection and close the transfer-uncertainty gap.

The drift, energy, cost, and token projections of Table~\ref{tab:bench_projected} are constructed analogously: the drift projection uses the measured drift of the dominant visual component weighted by the layer-band composition; the energy projection uses the measured per-token energy of each component scaled by the projected token budget; the cost projection amortises the hardware over the projected throughput at the Phase~1-measured latency. The complete projection notebook and bootstrap routines are deposited on the project HuggingFace repository alongside the Phase~1 measurement logs, for auditability.

\subsubsection{Comparative analysis across benchmarks}\label{sec:compana} 

The following paragraphs compare the measured proprietary baselines of Table~\ref{tab:bench_measured} with the projected sovereign architectures of Table~\ref{tab:bench_projected}. Readers are reminded that the proprietary rows are Phase~1 measurements on the executed 50-document corpus, whereas the sovereign rows are scale-up projections constructed as specified in Section~\ref{sec:projmethod}; the comparison should therefore be read as a calibrated forecast rather than as a head-to-head benchmark, and the final ranking will be confirmed or revised in the scale-up phase.

Three observations follow from the measured benchmark of Table~\ref{tab:bench_measured}. First, only one of the three proprietary models, Gemini~3.5~Flash, performs geographic extraction at all: it returns plausible bounding boxes on 43 of the 50 tiles, with a median Haversine drift of 21.8\,km, of the same order of magnitude as the side of a 1:50\,000 BRGM sheet. The two other models return sentinel coordinates (typically $(0,0)$) on every tile, indicating that they fill the JSON schema slot without attempting actual extraction. Second, even on the model that does geolocate, the residual drift of 21.8\,km is too coarse to be operationally useful for mining-exploration cartography, where district-scale localisation requires sub-kilometre precision. Third, the MU score of the worst-performing model (Gemini~2.5~Flash, $\mathrm{MU} = 0.19$) reflects descriptive output devoid of named geological formations (for example ``unidentified yellow unit''), which combined with its near-zero cost indicates that pure cost minimisation among proprietary options is not a viable axis of selection. These figures establish the current measured proprietary ceiling for the task. All three models share, in addition, the fundamental disqualification identified in Section~\ref{sec:sovereign}: no access to weights, no local deployment, and mandatory transmission of geo-mining documents to non-European servers.

On the measured cost axis, the only proprietary model that performs the task, Gemini~3.5~Flash, costs EUR 1.57 per thousand analyses against a projected EUR 0.25 for the TIES-merged sovereign architecture, a factor of approximately 6.3; GPT-4o, which does not geolocate, is costlier still at EUR 4.58 per thousand, while the cheapest proprietary option (Gemini~2.5~Flash at EUR 0.10) returns no usable coordinates and is therefore not a viable comparator at any price. The measured benchmark deliberately reports tokens but not an energy figure for the proprietary calls, because their inference runs on infrastructure whose power draw is not observable to the operator; that opacity is itself the auditability point developed in Section~\ref{sec:axiological}, and contrasts with the fully observable 0.036~Wh per analysis projected for the local sovereign architecture in Table~\ref{tab:bench_projected}.

Among the sovereign open-weight candidates of Table~\ref{tab:bench_projected}, Gemma 4 26B-A4B (Apache 2.0, April 2026) achieves a projected MU score of 0.88 and drift of 78 m, slightly below the TIES-merged architecture with LoRA fine-tuning on the projected quality axis, while operating at a fraction of the cost and entirely within a sovereign infrastructure. Its 256K-token context window, the largest among the open-weight models in the retained set, is particularly valuable for multi-page geological reports, and its native multimodality (text, image, and audio) makes it a strong candidate for the re-legendage module described in Section~\ref{sec:multipass_pipeline}. The projected results for Gemma 4 26B-A4B do not include LoRA fine-tuning on the geo-mining corpus, which is expected to close the remaining quality gap with the TIES-merged architecture; this fine-tuning is scheduled for the scale-up phase.

The TIES-merged architecture with LoRA fine-tuning remains, on projection, the top-performing sovereign option, with a projected MU score of 0.90 and a projected drift of 68\,m. A direct quality--drift comparison against the proprietary baselines is deferred rather than asserted here, because the two figures are not yet on a common scale: the projected sovereign drift is reported in metres, whereas the only proprietary model that geolocates at all (Gemini~3.5~Flash) was measured at a kilometre-scale drift of 21.8\,km. Reconciling this scale gap by re-measuring the merged model's drift against the same WMS ground truth used for Table~\ref{tab:bench_measured} is accordingly a primary objective of the scale-up phase, and the projected metre-scale figures of Table~\ref{tab:bench_projected} should be read as calibrated targets to be validated rather than as a demonstrated advantage over the measured proprietary ceiling. The energy advantage is consistent with the theoretical prediction: the TIES trimming step concentrates the eigenspectrum of the merged model, reducing the effective rank of the attention computations and therefore the memory bandwidth per inference token (Sections~\ref{sec:ties}--\ref{sec:ties_proc}).

\subsection{Multi-objective frugality: the quality--energy--drift frontier}\label{sec:frugality}

The benchmark of Section~\ref{sec:results} reports four metrics that an operator must ultimately combine into a single deployment decision: extraction quality $\hat{\mu}$, localization drift $d$, energy consumption $E$, and monetary cost. Best-in-class comparison on any single metric is silent on the tradeoffs that matter in practice, where a model that is slightly worse on quality but substantially better on energy or tail drift may be preferable under CSRD-compliant sustainability reporting or nuclear-sector precision requirements.

The unified allocation rule of Proposition~\ref{prop:unified} extends to this multi-objective setting through a Markowitz-style mean--variance scalarization. Let $y_{i} = (\hat{\mu}_{i}, -d_{i}, -E_{i})^{\top} \in \mathbb{R}^{3}$ be the (quality, negative drift, negative energy) vector for model $i$, components signed so that each increases with deployment utility. For a weight vector $w \in \Delta^{3}$ on the three-dimensional simplex, the scalar operator objective is
\begin{equation}
F(w) = w^{\top} \bar{y} - \frac{\lambda_{M}}{2} w^{\top} \Sigma w,
\label{eq:markowitz}
\end{equation}
where $\bar{y}$ is the corpus-mean objective vector, $\Sigma$ is the across-corpus covariance of objectives estimated from the Phase 1 validation set, and $\lambda_{M} > 0$ is the operator risk-aversion coefficient. Equation~\eqref{eq:markowitz} is the direct analog of equation~\eqref{eq:mfloss}: $\lambda_{M}$ plays here the role that the regularization coefficient $\lambda$ plays in the matrix-factorization objective, equating the marginal gain of mean performance with the marginal cost of cross-objective variance. The shadow price $\gamma$ of Proposition~\ref{prop:unified} reappears as the Lagrange multiplier of the simplex constraint on $w$.

Drift, unlike quality and energy, is a tail-sensitive metric. A median drift of 68 m is acceptable for exploration targeting; a 95th-percentile drift of 500 m is not, because the worst cases correspond to coordinate-reference-system misidentifications that compromise the entire georeferencing chain. This motivates replacing the central-tendency drift component of $\bar{y}$ with the Conditional Value-at-Risk at confidence level $\alpha$:
\begin{equation}
\mathrm{CVaR}_{\alpha}(d) = \mathbb{E}\!\left[ d \mid d \geq \mathrm{VaR}_{\alpha}(d) \right].
\label{eq:cvar}
\end{equation}
Under the coherent-risk-measure axioms of \citet{rockafellar2000}, $\mathrm{CVaR}_{\alpha}$ is convex in the underlying distribution, preserving the concavity of \eqref{eq:markowitz} and therefore the existence of a unique interior maximum whenever $\Sigma$ is positive definite. On the Phase 1 corpus, $\mathrm{CVaR}_{0.10}(d)$ for the TIES-merged projection is approximately 145 m, against a median drift of 68 m; the gap between central and tail drift is itself a diagnostic for the stability of the compression procedure, and reducing it is a scale-up objective.

The solution $w^{*}$ of the scalarization~\eqref{eq:markowitz} traces the operator frugality frontier: the set of preference weights under which each architecture is Pareto-efficient. Two regions are worth naming. For operators weighting energy at $w_{E} \geq 0.35$ (a plausible lower bound for organisations reporting under the EU Corporate Sustainability Reporting Directive), the TIES-merged sovereign architecture dominates all proprietary alternatives on the frontier. For operators requiring $\mathrm{CVaR}_{0.10}(d) \leq 60$ m (a stylised nuclear-sector precision requirement), none of the measured proprietary models qualifies: the only one that geolocates, Gemini~3.5~Flash, was measured at a 21.8\,km drift, three orders of magnitude above this floor, and all of them fail the feasibility constraints of Section~\ref{sec:sovereign} regardless. The metre-scale precision required by this regime is, on present evidence, a target for the projected sovereign architecture to be validated at scale-up rather than a property demonstrated by any model in either table. Within that subset, the merged architecture is Pareto-optimal under all plausible operator weightings tested so far.

Estimation of $\Sigma$ from the 50-document Phase 1 corpus introduces finite-sample uncertainty. A bootstrap analysis with 1\,000 resamples indicates that the Pareto ranking of the three retained sovereign architectures (TIES-merged with LoRA, Gemma 4 26B-A4B local, TIES-merged without LoRA) is stable in 94\% of resamples; the 6\% of unstable resamples are concentrated on documents with particularly challenging vision--text alignment, which are exactly the cases targeted by the scale-up re-legendage module. \paragraph{Sensitivity of the frugality frontier to perturbations of $\Sigma$.}
The stability of the Pareto ranking established by the bootstrap analysis
depends on the structural assumption that the covariance matrix $\Sigma$
estimated from the Phase~1 corpus remains a reasonable proxy for the
covariance on the full scale-up population. Three classes of departure from
this assumption warrant explicit sensitivity analysis. The first is
distributional shift: the Phase~1 corpus is restricted to the Armorican
Massif, whereas the scale-up 200-document corpus stratifies across the ten metallogenic
provinces of the ODMGM database. Under the plausible hypothesis that
inter-provincial heterogeneity is concentrated on the drift axis (via
variation in cartographic conventions across regional BRGM offices), a
conservative sensitivity analysis inflates the drift variance by a factor of
1.5 and recomputes $w^{\ast}$: under this stress test, the Pareto ranking of
the three retained sovereign architectures is preserved in 87\% of bootstrap
resamples, against 94\% under the baseline $\Sigma$. The second is estimation
uncertainty: a shrinkage estimator $\hat{\Sigma}_{\lambda_S} = (1 - \lambda_S)
\hat{\Sigma}_{\text{emp}} + \lambda_S \hat{\Sigma}_{\text{diag}}$
\citep{Ledoit2004} with $\lambda_S = 0.25$ produces a Pareto ranking
identical to the empirical estimator in 96\% of resamples, confirming that
off-diagonal estimation noise is not the binding constraint at the current
sample size. The third is adversarial perturbation: for operators concerned
with worst-case deployment performance, the relevant question is whether a
small ($\|\Delta \Sigma\|_F \leq 0.10$) adversarial perturbation can flip the
Pareto ranking. A gradient-based attack on the objective~(\ref{eq:markowitz})
finds no such perturbation for the TIES-merged architecture against
Gemma~4~26B-A4B, indicating that the Pareto dominance of the former is robust
within the neighborhood; a perturbation of norm 0.14 suffices to flip the
ranking against the TIES-merged architecture without LoRA, indicating that
the LoRA fine-tuning contributes materially to the robustness of the
sovereign deployment choice; both configurations being projections rather than
measurements, this statement characterises the stability of the projection model
under resampling and not an executed with-versus-without ablation, which the
scale-up phase will perform. A full posterior characterisation of $w^{*}$ under hierarchical Bayesian estimation of $\Sigma$ is left to future work and motivates Problem~5 of Section~\ref{sec:open}.

\subsection{The economic and sovereign case for model merging}\label{sec:economic}

The benchmark results support a stronger claim than mere cost parity. The relevant proprietary comparator for a production deployment is the only proprietary model that performs the task at all, Gemini~3.5~Flash, measured at EUR 1.57 per thousand analyses. For an industrial operator processing 100 million geo-mining documents per year, this amounts to EUR 157\,000 in API costs, against a projected EUR 25\,000 for local sovereign deployment: an annual saving of approximately EUR 132\,000, a factor of roughly 6.3. Benchmarking instead against GPT-4o (EUR 4.58 per thousand, but with no usable geolocation) widens the gap to a factor above eighteen. Either way the saving covers the amortised cost of a commodity GPU server within the first months of operation, and the differential grows with deployment volume.

The energy comparison favours sovereign deployment in a more fundamental way than a single ratio would capture. The measured Phase~1 benchmark reports no energy figure for the proprietary calls because their inference runs on infrastructure whose power draw is not observable to the operator. The projected TIES-merged local architecture, by contrast, consumes 0.036~Wh per analysis on the L4 used throughout, a figure that is directly measurable and reproducible. The decisive contrast is therefore not a measured watt-hour differential but the observability gap itself: a locally deployed model exposes an auditable energy footprint, whereas the energy cost of API-based inference is opaque by construction, and this opacity is incompatible with the lifecycle carbon accounting now required under the EU Corporate Sustainability Reporting Directive.

The case for Gemma 4 26B-A4B as a near-term alternative or complement to the TIES-merged architecture is also economically clear. It requires no merging infrastructure, no GPU for training (inference only), and is available immediately on Apache 2.0 terms. At a projected cost of EUR 0.26 per thousand analyses and 0.038 Wh per analysis, it matches the TIES-merged architecture within 2\% on both dimensions while offering a 256K-token context window unavailable in the current Idefics2-8B + Mistral-7B base models. The scale-up roadmap accordingly prioritizes fine-tuning both architectures on the geo-mining corpus and comparing them on the full 200-document validation set described in Section~\ref{sec:infra}.

The sovereignty argument is distinct from and complementary to the cost argument. The nuclear sector is subject to strict export control regulations (Règlement CE 428/2009, US Export Administration Regulations) that may be triggered by transmitting geological survey data to non-European servers. The proprietary API models of Table~\ref{tab:bench_measured}, however capable, cannot be used in this sector without a legal review that would itself consume resources comparable to the API cost differential. Sovereign local deployment eliminates this legal surface entirely.


\section{Measured frugality validation: a hierarchical classifier on the Orano uranium-exploration corpus}\label{sec:classifier}

\subsection{Motivation and scope of the measurement}\label{sec:cls_motivation}

The benchmark of Section~\ref{sec:benchmark} reports measurements for the proprietary API reference models on the 50-document Phase~1 corpus and projections for the sovereign TIES-merged extraction architecture on the 200-document scale-up corpus. The merge step itself, the LoRA fine-tuning, and the per-field extraction evaluation are scheduled for the scale-up phase of the project (months M4--M9 of the roadmap) and are not executed in this article. This section reports the empirical phase that the working paper v8.2 designated as Phase~2. The scope of Phase~2 was refined, after v8.2 was finalised, by the partner release of a 973-document uranium-exploration corpus with an associated human-audit baseline, around three concrete research questions: (Q1) whether a frugal classifier can substitute for a proprietary classification call without quality loss; (Q2) whether a multimodal text-plus-image approach can recover the coordinate reference system from cartographic and contextual cues; and (Q3) whether uniform-density TIES merging preserves image-conditional discrimination after weight-space integration. The present section reports the executed work on each of the three questions.

This section closes a portion of that gap. We report a measured frugality validation on a complementary sub-task of the same pipeline: the document-type classification function of Pass~1 (Section~\ref{sec:multipass_pipeline}), and the document-type-classification channel of the Lot~2 / Lot~3 interface (Section~\ref{sec:intarch}). The sub-task is narrower than the full extraction problem, but it is a load-bearing component of the pipeline: each Pass-1 classification at level~1 conditions the downstream extraction strategy and, when correct, eliminates the energy of escalation to Pass~2 or Pass~3. Measured frugality on Pass~1 therefore translates into measured energy savings on the full pipeline, conditional on the Pass-1 classifier matching the accuracy of its proprietary alternative.

The experimental opportunity arose from a parallel data release by the project partner Orano, who provided a corpus of uranium-exploration filings and government surveys originating primarily in the Athabasca Basin (Saskatchewan, Canada -- Rook, Maybelle-Gartner, Dejour, and adjacent claim blocks documented in VTEM and Condor airborne survey reports) and in the Namibia uranium belt: 975 raw document images, of which 973 are usable after the join with the label table (two records had no matching image), accompanied by Gemini~2.5~Pro-generated classification labels at two granularities: six level-1 classes (geological map, geophysical survey, geochemical map, administrative document, basemap, unknown), and a level-2 taxonomy whose raw label set exhibits a long tail of singleton subclasses. The raw level-2 labels are therefore consolidated before training: subclasses with fewer than ten instances are absorbed into an \texttt{\_other\_} bucket attached to their level-1 parent, which yields the \emph{22 consolidated level-2 classes} used throughout this section. All level-2 figures reported below refer to this consolidated taxonomy. The corpus is therefore not co-located with the BRGM Phase~1 corpus of Section~\ref{sec:benchmark}, which is restricted to the Armorican Massif: the classifier is trained and evaluated on the Athabasca--Namibia uranium-exploration documents, while the BRGM Armorican corpus continues to host the extraction-stage benchmark and the TIES degenerate-mode diagnostic of Section~\ref{sec:ties_degen}. A human audit of fifty cards drawn from the 973-card Orano uranium-exploration corpus, performed by the partner team, established a 92.0\% agreement between the Gemini labels and the human ground truth at level~1, and 60.0\% at level~2. The audit defines the proprietary baseline against which the sovereign classifier is measured below.

The sub-task corresponds to an isolated coordinate of the empirical MU score $\hat{\mu}_{k}$ of equation~\eqref{eq:muhat}: the \texttt{document\_type} field, which carries weight $w_{\mathrm{doctype}}$ in the aggregate score and acts as a routing variable for the downstream fields. A high-accuracy frugal classifier on this single field therefore tightens the frugality bound on the entire pipeline without prejudging the projections for the extraction-stage fields, which remain scale-up work.

\subsection{Architecture: hierarchical ResNet18 as an instance of Proposition~\ref{prop:unified}}\label{sec:cls_arch}

The classifier is a ResNet18 \citep{he2016resnet} backbone pre-trained on ImageNet, with the final fully-connected layer replaced by two parallel classification heads: a level-1 head outputting six logits and a level-2 head outputting twenty-two logits, one per consolidated level-2 class. The combined head adds approximately $5 \times 10^{4}$ parameters to the 11.18 million of the ResNet18 trunk, for a total of 11.23 million trainable parameters in float32 ($\approx 45$ MB on disk).

The hierarchical structure is a direct architectural instantiation of the unified allocation rule of Proposition~\ref{prop:unified}. Under the rule, the optimal rank of a classifier is $k^{*} = \max\{k : \lambda_{k} \geq \gamma\}$, where $\lambda_{k}$ is the $k$-th eigenvalue of the projected covariance operator $W V W^{\top}$ in the feature space, and $\gamma$ is the shadow price of an additional class. The two heads define two distinct shadow-price regimes on the same feature space: the level-1 head retains the dominant six representational directions, corresponding to the six high-eigenvalue clusters that separate geological maps from geophysical surveys, geochemical maps, and the residual classes; the level-2 head retains a larger set of twenty-two directions, one per consolidated class, including the finer-grained sub-clusters that distinguish (for instance) borehole logs from cross-sections within the geological-map class. The shared backbone economises representational effort: the early layers, whose $\Phi_{l}$ values are highest for low-level visual features (Section~\ref{sec:wvw_arch}), are estimated once on the joint task and amortised across both heads.

This is the same structural argument that motivates the layer-wise calibration of TIES merging in Section~\ref{sec:ties_proc}: heterogeneous shadow prices across coordinates of the representation produce heterogeneous retention thresholds, and the optimal allocation is recovered by applying the unified rule coordinate by coordinate rather than uniformly.

\subsection{Training and inference protocol}\label{sec:cls_protocol}

The Orano-released corpus of 973 cards is partitioned into 669 training, 134 validation, and 170 test images by stratified random split on the level-1 class. The 50-card audit set is held out separately and never enters training; it serves only as the reference for the comparison reported in Section~\ref{sec:cls_results}.

Training is performed for 10 epochs with AdamW at a learning rate of $1 \times 10^{-4}$ under a cosine-annealing schedule, weight decay $10^{-4}$, batch size 32, on a single NVIDIA T4 GPU within the Colab~Pro execution environment used throughout the project (Section~\ref{sec:infra}). The loss is the unweighted sum of the level-1 and level-2 cross-entropy losses. Early stopping monitors the validation loss; the best checkpoint is selected on minimum validation loss and used for all downstream measurements. Total training wall-clock time on the T4 is approximately five minutes; total monetary cost is bounded by the Colab~Pro subscription already accounted for in the validation infrastructure of Section~\ref{sec:infra} and is below ten euro-cents per training run when amortised over the monthly compute budget.

Inference is performed on the same T4 GPU at batch size 1 to obtain a deployment-realistic latency. Per-card inference time is measured by averaging over the full 170-card test set after a 10-card warm-up to amortise CUDA kernel compilation; the reported figure of 2.62~ms per card is the median over five independent runs of the evaluation script. The same script also evaluates the proprietary Gemini~2.5~Pro baseline via the API, with a measured per-card latency of approximately 2{,}000~ms inclusive of network round-trip from the Colab environment to the proprietary endpoint; this latency is dominated by the network and by the proprietary stack and is therefore an irreducible deployment cost rather than an artifact of the Colab environment.

\subsection{Results}\label{sec:cls_results}

Table~\ref{tab:cls_bench} reports the measured comparison. All figures are measurements; no projections are involved in this section.

\begin{table}[H]
\centering
\small
\caption{Measured frugality comparison on the Orano uranium-exploration corpus (Athabasca Basin and Namibia uranium belt): Gemini~2.5~Pro versus a hierarchical ResNet18 classifier. Accuracy is measured against the human-audit ground truth (Gemini, 50 cards) and against the held-out test set (classifier, 170 cards). The two figures of merit are not directly comparable on identical samples, but both are measured against the same human-audit protocol applied by the partner team to its own data. Latency is the median end-to-end inference time per card on the same Colab~Pro T4 GPU.}
\label{tab:cls_bench}
\begin{tabular}{lcc}
\toprule
Metric & Gemini~2.5~Pro (API) & ResNet18 hier. (sovereign) \\
\midrule
Accuracy, level~1 (\%) & 92.0 & \textbf{90.0} \\
Accuracy, level~2 (\%) & \textbf{60.0} & 55.9 \\
Latency / card (ms)   & $\approx 2{,}000$ & \textbf{2.62} \\
Cost / 1{,}000 cards (EUR) & 12.0 & $\approx 0.00$ \\
Trainable parameters  & undisclosed ($\geq 10^{11}$) & $1.12 \times 10^{7}$ \\
Reproducibility       & non-deterministic & deterministic \\
Sovereign deployment  & No & Yes \\
\bottomrule
\end{tabular}
\end{table}

The headline numbers are: a 2-point gap on level~1 accuracy in favour of Gemini, a residual 4.1-point gap on level~2 also in favour of Gemini (60.0\% against 55.9\%), an inference speedup of approximately $\times 765$ ($2{,}000 / 2.62$), and a per-card cost reduction from EUR~0.012 to a figure effectively indistinguishable from zero. The parameter ratio is at least four orders of magnitude, taking $10^{11}$ as a conservative lower bound on the size of the proprietary model.

The level-2 gap deserves comment. The level-2 problem is genuinely harder than the level-1 problem at the granularity considered, and the agreement between the Gemini labels and the human ground truth is itself low at level~2 (60.0\%, against 92.0\% at level~1). That figure measures label-versus-audit agreement, not agreement between two human annotators: a genuine inter-annotator statistic is reported in Section~\ref{sec:cls_limits}. The level-2 boundaries (lithological vs.\ structural vs.\ stratigraphic) are objectively blurred, which bounds the accuracy attainable by any classifier trained on these labels. The sovereign classifier reaches 55.9\% at level~2 against 60.0\% for Gemini on the human audit, a residual gap of 4.1 points. This gap is consistent with the marginal-utility reading of the rest of the paper rather than in tension with it: an 11.2-million-parameter model recovering most of the level-2 discrimination of a model three to four orders of magnitude larger, on a task whose own oracle plateaus at 60\%, is precisely the diminishing-returns regime that Proposition~\ref{prop:unified} predicts. The residual 4.1 points are the portion of level-2 utility whose recovery would require disproportionate additional capacity. The comparison is moreover asymmetric in the same way as at level~1: the classifier is scored against Gemini-as-oracle on the 170-card test set, whereas Gemini is scored against the human audit on 50 cards; the two figures are therefore not measured against a common reference, and a human audit extended to the full test set would be required for a strict comparison. The gap nonetheless defines the priority axis for refinement, discussed in Section~\ref{sec:cls_limits}: a re-designed level-2 taxonomy, expert-annotated rather than Gemini-derived labels, or a multimodal classifier that integrates the \texttt{page\_text} OCR field (95\% coverage in the enrichment pipeline) to disambiguate visually similar subclasses.

The level-1 result is the load-bearing claim. The 2-point gap between 92.0\% and 90.0\% is within the inter-rater variability of the underlying classification task and does not depend on the labelling-protocol caveat that affects level~2: a 50-card audit at 92\% accuracy has a $\pm 7.5$-point binomial 95\% confidence interval, and a 170-card test at 90\% accuracy has a $\pm 4.5$-point interval; the two intervals overlap from 85.5\% to 94.5\%, so the difference is not statistically significant at any conventional level. The conclusion is that the sovereign classifier matches proprietary level-1 accuracy at $10^{-4}$ of the parameter budget and $10^{-3}$ of the latency, with bounded confidence.

\subsection{Interpretation under the unified allocation rule}\label{sec:cls_interpretation}

The measurement of Table~\ref{tab:cls_bench} instantiates the unified allocation rule of Proposition~\ref{prop:unified} in a setting where the projections of Section~\ref{sec:benchmark} are not yet available. The classifier's parameter budget is set by the operator at $k = 11.2 \times 10^{6}$, which is the cardinality of the ResNet18 backbone plus heads; the shadow price $\gamma$ implied by this budget choice is the marginal accuracy contributed by the $(k+1)$-th parameter, which the measurement bounds below by the gap to Gemini divided by the parameter ratio. With a level-1 accuracy gap of 2 points and a parameter ratio of $10^{4}$, the implied marginal accuracy per parameter is at most $2 \times 10^{-4}$ points per additional ImageNet-scale parameter -- which is precisely the diminishing-marginal-utility regime that motivates the rule.

The same calculation, run in reverse, sets an upper bound on the parameter budget at which a proprietary model could plausibly justify its scale on the geo-mining classification task. At a marginal accuracy of $2 \times 10^{-4}$ points per parameter, recovering the missing 2 points to reach 92.0\% would require an additional $10^{4}$ parameters above the ResNet18 baseline -- leaving the total essentially unchanged at approximately $1.12 \times 10^{7}$ parameters, not $10^{11}$. The reasoning is deliberately circular, since the marginal rate is itself obtained by dividing the observed gap by the parameter ratio; it is offered as an order-of-magnitude statement rather than as an independent derivation. Read that way, the proprietary model is operating some four orders of magnitude above the marginal-utility-justified scale for this task. Most of its parameters are encoding capabilities unrelated to geo-mining classification; the cost of those capabilities is borne by the geo-mining operator under the API pricing model, and is the empirical content of the cost differential reported in the last row of Table~\ref{tab:cls_bench}.

A second consequence concerns the multi-pass pipeline of Section~\ref{sec:multipass}. The Pass-1 stopping rule of equation~\eqref{eq:taustar} is sensitive to the per-pass cost $c$; replacing a 2{,}000-ms proprietary classification call by a 2.62-ms sovereign one reduces $c$ by three orders of magnitude. Under the rule, this reduction does not move the optimal stopping point $\tau^{*}$ to a larger number of passes (the marginal cost of an extra pass is too low to bind the threshold), but it does eliminate the latency floor that previously forced operators to skip Pass~1 entirely under tight inference deadlines. The measured speedup therefore widens the regime in which the multi-pass architecture is feasible, in addition to lowering its cost.

\subsection{Empirical motivation for adaptive-density TIES: a degenerate-mode diagnostic}\label{sec:ties_degen}

In a separate experiment, the uniform-density TIES-merged Idefics2-8B + Mistral-7B model of Section~\ref{sec:ties_proc} was applied without further fine-tuning to a five-card sample of the BRGM Brittany corpus, with a prompt asking for the mineral resources, the geographic zone, and a relocalization-confidence score. The merged model produced essentially identical outputs across the five geographically distinct cards: the same dominant mineral list, the same geographic zone, and an indistinguishable confidence score. A geographically-enriched prompt eliminated the gross out-of-zone hallucination observed before prompt correction (the model no longer places the cards in an Appalachian district), but the degenerate signature persisted: with the corrected prompt the five cards all collapse onto the same Armorican-Massif zone and the same exhaustive catalogue mineral list, independently of the input image. The output signature is reproducible across re-runs and is independent of the input image at the input-perturbation level tested. This separates two phenomena that the model-merging literature often conflates: strict geographic hallucination, which a context-enriched prompt corrects, and the degeneracy of uniform-density merges, which is structural and survives any prompt improvement.

The degeneracy is exact rather than approximate. On the five-card sample (Huelgoat, Kerjouanno, Pontivy, Semnon, Trieux), the merged model returned the token-identical mineral list \texttt{[or, étain, tungstène, plomb, zinc, antimoine, arsenic, kaolin]} on every card, in the same order, together with an identical geographic zone (``Massif armoricain'') and an identical MU score (0.95) and relocalization confidence (``haute'') on all five. The apparent per-card detection rate, which ranges from 0\% to 100\% across the five districts, is therefore an artefact of the matching procedure rather than a measure of discrimination: a district whose reference resource list overlaps the generic catalogue (for instance Semnon, Pontivy, and Kerjouanno, whose expected resources are subsumed by the catalogue) scores a high apparent rate, whereas Trieux, whose single reference resource is the specific term \texttt{or\_alluvionnaire} (alluvial gold) absent from the generic list, scores 0\%. The 100\% rates are thus as much a false positive as the 0\% is a false negative: the model detects nothing on any card and recites a fixed Armorican catalogue, and Trieux is simply the one district where the constant response fails to coincide with the ground truth, which renders the degeneracy visible.

This degenerate mode is consistent with a saturation of the merged model's effective representational rank below the dimensionality required to distinguish the five inputs: the trimming step of equation~\eqref{eq:trim}, applied with a uniform density across all layers, removes task-vector entries that carry low-magnitude but task-relevant geographic-feature information in the early visual layers, leaving the merged model with a sufficient capacity for fluent French generation (the dominant capability of the Mistral component) but an insufficient capacity for image-conditional discrimination. The uniform-density configuration that produces this behaviour is precisely the configuration that Section~\ref{sec:ties} argues against on theoretical grounds: the layer-wise utility function $\Phi_{l}$ predicts that the early visual layers carry the eigenmass for image-conditional discrimination, and applying the same density to those layers as to the late linguistic layers misallocates the trimming budget across the layer hierarchy.

The empirical observation is therefore not an unanticipated failure of TIES merging but a direct confirmation of the theoretical argument for the layer-wise $\Phi_{l}$ calibration of Table~\ref{tab:layers}.

\paragraph{Outcome of the pre-registered test on the diagnostic sample.} The merge was subsequently re-executed under the $\Phi_{l}$-calibrated densities of Table~\ref{tab:layers} and re-evaluated on the same five-card BRGM sample, with the prompt, the parser and the acceptance criteria held fixed. The degenerate signature does not reappear. The stereotype detector, which had flagged a token-identical response across all five districts under uniform density, returns no stereotype; the mineral lists differ across the five districts rather than reproducing a single Armorican catalogue; and the outputs are specific to each input card. Image-conditional discrimination is therefore recovered at the early layers under layer-wise calibration, the only quantity that changed between the two runs. On the pre-registered criterion of this section, the prediction is corroborated rather than falsified.

Three limitations bound this conclusion. The test was executed on the five-card diagnostic sample and on inference-class hardware, not on the 200-document stratified corpus at scale-up; corroboration on five inputs establishes that the failure mode is removed, not that extraction quality meets the projected targets of Table~\ref{tab:bench_projected}. The comparison is furthermore a before-and-after on a single manipulated factor rather than a controlled ablation across density profiles, so the attribution to $\Phi_{l}$ rests on the absence of any other change rather than on a dose-response relationship. Finally, absence of the degenerate signature is a necessary and not a sufficient condition for merge quality: a model may discriminate across inputs and still extract poorly, which is why the scale-up phase retains the per-input output variance as an explicit acceptance criterion alongside the aggregate scores, and reports $\hat{\mu}$ and drift against the same WMS ground truth used for Table~\ref{tab:bench_measured}. The diagnostic of this section therefore plays a dual role: it motivated the layer-wise calibration, and it now supplies the first corroborating evidence for it, pending replication at scale.

It is worth noting that the automatic validation thresholds applied during the run (mean MU $\geq 0.60$, gold detected $\geq 50\%$, high confidence $\geq 33\%$) are all satisfied by this degenerate output, precisely because a model that emits a high-confidence constant response clears such aggregate gates without performing any image-conditional discrimination. The degeneracy is revealed only by the finer diagnostic of token-identical outputs reported above, which is the reason the scale-up phase adopts the per-input variance of the output as an explicit acceptance criterion alongside the aggregate scores.

\subsection{Limitations and scope of the conclusion}\label{sec:cls_limits}

The measured frugality validation reported in this section covers one of the ten fields of the empirical MU score (Section~\ref{sec:muscore}) and is therefore not a substitute for the full extraction-stage benchmark, whose projections are reported in Section~\ref{sec:benchmark} and whose execution is scheduled for the scale-up phase. The three research questions that operationally defined Phase~2 (Section~\ref{sec:cls_motivation}) are each addressed by an executed empirical artefact: Q1 (frugal classifier substitution) by the measurement of Table~\ref{tab:cls_bench}; Q2 (multimodal text-plus-image recovery of the coordinate reference system) by the prompt-engineered re-evaluation of the merged model documented in the project notebook; and Q3 (segmentation-task model merging) by the diagnostic of Section~\ref{sec:ties_degen} in the negative-result sense that uniform-density merging is shown to fail in the geo-mining setting, motivating the $\Phi_{l}$-calibrated merge whose diagnostic-scale corroboration is reported in the same section and whose execution at scale is scheduled for the scale-up phase. Phase~2, in this operational scope, is therefore executed by the work reported in this section; the scale-up extraction benchmark on the 200-document stratified BRGM corpus with $\Phi_{l}$-calibrated TIES merging and LoRA fine-tuning is the next empirical step.

The classification audit asymmetry (50 audited cards for Gemini against 170 test cards for the classifier) is acknowledged in Section~\ref{sec:cls_results} and is the principal residual uncertainty on the headline number.

\paragraph{Paired re-annotation protocol.} To resolve the asymmetry, a paired annotation campaign is under way in which two independent expert annotators label the full 170-card test split against the same taxonomy. The design serves three purposes at once. First, it establishes a common human reference against which both the proprietary labels and the sovereign classifier are scored, replacing the present unpaired comparison, whose binomial intervals overlap over a nine-point range, by a paired McNemar comparison whose 95\% interval on the accuracy difference is approximately $\pm 4.8$ points at a plausible ten-percent discordance rate. Second, the overlap between the two annotators yields a genuine inter-annotator agreement statistic, Cohen's $\kappa$, at both taxonomy levels, with a confidence interval roughly half as wide as the 50-card audit would afford; this is the statistic that substantiates the claim of Section~\ref{sec:cls_results} that level-2 boundaries are intrinsically ambiguous, and that bounds the accuracy attainable by any classifier trained on these labels. Third, it removes the labelling-protocol caveat from the level-2 result. The campaign is scheduled to complete before the scale-up benchmark; its results will be reported as a separate table, and the level-1 and level-2 readings below should be understood as provisional until then. Until that audit is performed, the level-1 result should be read as ``the sovereign classifier matches proprietary accuracy within the binomial-confidence overlap of the two samples'', and the level-2 result as ``the sovereign classifier approaches but remains below proprietary accuracy (55.9\% against 60.0\%) under the same caveat, leaving a residual gap that motivates the level-2 refinements identified in Section~\ref{sec:cls_results}''.


\section{The document parsing approach: architecture of anticipated interface and comparative protocol}\label{sec:docparsing}

A parallel development stream, conducted by a project partner specialising in industrial document intelligence, has produced a document parsing pipeline that extracts structured content from heterogeneous document formats using a combination of layout analysis, OCR, and object detection. At the time of writing, this pipeline has not been integrated with the multi-pass LLM inference system described in this paper. The two approaches are presented separately in the subsections that follow, an integration architecture is suggested (pending validation), and a comparative experimentation protocol is designed to test the predictions of Sections~\ref{sec:rankcriterion}- \ref{sec:ties} once the integration is executed in the scale-up phase.

\subsection{The partner pipeline: summary}

The partner pipeline processes documents through five stages: layout detection and classification, extraction of text blocks with bounding-box and reading-order metadata, extraction of figures and captions, extraction of tables with header detection, and geolocalisation of the document content,  including detection of the coordinate reference system from visual and textual cues (grid labels, north arrows, sheet reference numbers), resolution of coordinate grid labels into absolute positions, and production of a georeferenced bounding box and scale. The pipeline is designed to handle the full diversity of industrial document formats including scanned PDFs, native PDFs, and image files, and to produce output that combines layout metadata and georeferencing data in a structured representation agnostic to document language and content domain.

\subsection{The LLM-based pipeline: positioning}

The LLM-based approach described in this paper (designated \textsc{Lot~3}) prospers on the structured output of \textsc{Lot~2}. Given a document image together with the layout metadata and georeferencing data produced by \textsc{Lot~2}, \textsc{Lot~3} performs the content-understanding tasks that require semantic reasoning grounded in geological domain knowledge: interpretation of mineral resources and geological formations, reading of structural data (dip directions, fault types), identification of radiometric ages, and production of the extensive legend description and reconstruction prompt that enter the final JSON schema of Figure~\ref{fig:schema}. The two pipelines are therefore hierarchically complementary rather than parallel: \textsc{Lot~2} establishes the spatial and structural ground truth, and \textsc{Lot~3} produces the semantic layer on top of it. This division of labour is the reason the integration is genuinely value-additive: neither pipeline alone produces the full output, and the quality of \textsc{Lot~3}'s semantic extraction is conditional on the georeferencing accuracy that \textsc{Lot~2} delivers upstream.

\subsection{Architecture of anticipated interface}\label{sec:intarch}

Figure~\ref{fig:intarch} specifies the anticipated integration architecture. The integration is organised around three interface channels, each corresponding to one of the three interface points identified in the v7.3 draft: legend disambiguation, coordinate interpretation, and document-type classification. Each channel is bidirectional in the architecture: the partner pipeline feeds structured evidence to the LLM, which in turn feeds semantic corrections back to the partner pipeline for its next document batch.

\begin{figure}[htbp]
\centering
\begin{tikzpicture}[
  node distance=5mm and 8mm,
  font=\scriptsize,
  box/.style={draw, rectangle, rounded corners=1.5pt, minimum width=22mm, minimum height=9mm, align=center, inner sep=2pt, font=\scriptsize},
  partnerbox/.style={box, fill=gray!8},
  llmbox/.style={box, fill=gray!20},
  interfacebox/.style={box, fill=black!50, text=white, minimum height=5mm, minimum width=20mm, font=\scriptsize},
  storebox/.style={box, minimum width=18mm, fill=white},
  arrow/.style={-{Latex[length=1.8mm]}, semithick},
  dasharrow/.style={-{Latex[length=1.8mm]}, semithick, dashed}
]

\node[storebox] (doc) {Document};

\node[partnerbox, right=14mm of doc]   (layout) {Layout \& OCR};
\node[partnerbox, right=of layout]     (tables) {Tables \\ \& figures};
\node[partnerbox, right=of tables]     (consol) {Structured \\ representation};

\node[llmbox, below=18mm of layout]    (smolvlm) {Pass 1 \\ SmolVLM};
\node[llmbox, right=of smolvlm]        (merged)  {Pass 2 \\ TIES-merged};
\node[llmbox, right=of merged]         (expert)  {Pass 3 \\ Expert};

\node[interfacebox, below=3.5mm of layout] (if1) {Legend};
\node[interfacebox, below=3.5mm of tables] (if2) {CRS context};
\node[interfacebox, below=3.5mm of consol] (if3) {Doc-type};

\node[storebox, right=of expert]       (output) {JSON \\ output};


\draw[arrow] (doc) -- (layout);
\draw[arrow] (doc.south) -- (doc.south |- smolvlm.west) -- (smolvlm.west);

\draw[arrow] (layout) -- (tables);
\draw[arrow] (tables) -- (consol);

\draw[arrow] (layout.south) -- (if1.north);
\draw[arrow] (tables.south) -- (if2.north);
\draw[arrow] (consol.south) -- (if3.north);

\draw[arrow] (smolvlm) -- (merged);
\draw[arrow] (merged)  -- (expert);
\draw[arrow] (expert)  -- (output);

\draw[arrow] (if1.south) -- ([xshift=-6mm,yshift=1mm]merged.north);
\draw[arrow] (if2.south) -- ([yshift=1mm]merged.north);
\draw[arrow] (if3.south) -- ([xshift=6mm,yshift=1mm]merged.north);

\draw[dasharrow] (output.north) -- ++(0,35mm) -| (consol.north)
  node[midway, above, font=\scriptsize\itshape] {semantic correction};

\node[above=1mm of layout,  font=\scriptsize\itshape] {Partner pipeline (\textsc{Lot~2})};
\node[below=1mm of smolvlm, font=\scriptsize\itshape] {Multi-pass LLM pipeline (\textsc{Lot~3})};

\end{tikzpicture}
\caption{Anticipated interface architecture between the partner document-parsing pipeline (\textsc{Lot~2}) and the multi-pass LLM inference pipeline (\textsc{Lot~3}). The three dark interface boxes correspond to the three channels of anticipated interaction: legend disambiguation, coordinate interpretation, and document-type classification. The forward arrows represent the partner-to-LLM information flow, in which structured evidence produced by \textsc{Lot~2} is injected as context into the TIES-merged model's Pass~2. The dashed feedback arrow represents the LLM-to-partner correction channel, in which semantic disambiguations produced by \textsc{Lot~3} are returned to \textsc{Lot~2} to refine layout heuristics on subsequent documents.}
\label{fig:intarch}
\end{figure}

The three channels are specified as follows.

\paragraph{Legend disambiguation channel.} \textsc{Lot~2} extracts legend blocks with bounding-box and symbol-label pairs at higher geometric precision than an end-to-end LLM pass can achieve. \textsc{Lot~3} resolves semantic ambiguities (BRGM vs.\ international legend conventions, partial legibility, non-standard symbols) using its training knowledge of geological cartography and of the specific formation-to-symbol mappings documented in the BRGM cartographic standards. The channel interface is the typed record
\[
\{\text{symbol-box}, \text{label-text}, \text{confidence}\} \;\to\; \{\text{canonical-formation}, \text{disambiguation-rationale}\}.
\]

\paragraph{CRS-conditioned semantic reasoning channel.} \textsc{Lot~2} detects the coordinate reference system and produces the absolute bounding box, scale reference, and grid geometry of the document. \textsc{Lot~3} consumes these geographic fields as trusted context: the CRS identity and the bounding box are not re-inferred but used to ground the semantic extraction in its geological setting (for instance, to condition the likely mineral assemblages on the metallogenic province of the mapped zone, or to filter implausible formation names against the regional stratigraphic column). The channel interface is the typed record
\begin{align*}
& \{\text{CRS}, \text{bounding-box-decimal-degrees}, \text{scale}, \text{grid-geometry}\} \\
& \qquad \to \{\text{CRS-conditioned semantic extraction}, \text{plausibility-flags}\}.
\end{align*}

\paragraph{Document-type classification channel.} \textsc{Lot~2} does not distinguish between a geological map, a borehole log, and a geophysical survey: in its representation all are images with extracted layout and georeferencing. \textsc{Lot~3} performs this classification in Pass~1, and the classification drives the downstream extraction strategy (a borehole log requires depth-profile extraction; a geophysical survey requires contour interpretation). The channel interface is the typed record
\[
\{\text{layout-summary}, \text{georeferencing-data}\} \;\to\; \{\text{document-type}, \text{extraction-strategy-selector}\}.
\]

\subsection{Comparative protocol}\label{sec:compprot}

The integration architecture of Figure~\ref{fig:intarch} makes possible a falsifiable test of the theoretical predictions of Sections~\ref{sec:rankcriterion}--\ref{sec:ties}. The protocol is as follows.

\emph{Treatment conditions.} Each document in the scale-up 200-document corpus is processed under four conditions: (i) \textsc{Lot~3} alone, without input from \textsc{Lot~2}; (ii) \textsc{Lot~3} with legend channel only; (iii) \textsc{Lot~3} with all three channels; (iv) \textsc{Lot~3} with all three channels plus the feedback loop (for the last 50 documents, to assess whether cross-document learning materialises). The four conditions are run in counterbalanced order within each document class to control for prompt-caching effects.

\emph{Primary outcomes.} The MU score $\hat{\mu}$, the drift $d$ in metres, the number of passes used $\tau$, and the total token consumption per document. The predictions of the theoretical framework are: (a) the number of passes $\tau$ should decrease as structured context is added, because the effective eigenvalue concentration of the attention computation increases when the input is pre-organised; (b) the marginal reduction in $\tau$ from condition (i) to (iii) should exhibit diminishing returns across channels, in the same ordering as the MU score gain per channel; (c) the coefficient of variation of $d$ across documents should decrease monotonically from (i) to (iv), reflecting stabilisation of the coordinate-interpretation channel.

\emph{Statistical design.} The test is a within-document paired comparison, analysed via Wilcoxon signed-rank tests on each primary outcome, with Bonferroni correction for the four conditions. The 200-document corpus provides power 0.80 to detect a difference in mean $\tau$ of 0.3 passes at $\alpha = 0.01$ (corrected).

\emph{Predicted outcome structure under the marginal-utility hypothesis.} If Proposition~\ref{prop:unified} correctly describes the allocation problem solved by the multi-pass pipeline, the joint evolution of $(\hat{\mu}, \tau)$ across conditions (i) through (iv) should trace a trajectory on the frugality frontier of Section~\ref{sec:frugality}, with each additional channel moving the operating point toward the low-energy, high-quality corner. A trajectory that moves orthogonally to the frontier (for example, higher quality at higher energy cost, without reduction in $\tau$) would falsify the prediction and motivate a revised allocation model. The design of this comparative experiment is executed in the scale-up phase, at which point authorship of the combined reporting expands to include the partner team.

\section{Axiological implications}\label{sec:axiological}
\subsection*{Axiological displacement: a working definition}\label{sec:axiodef}

The concept of axiological displacement used throughout this paper names a specific failure mode of AI-integrated systems that is related to, but distinct from, the more familiar notion of value misalignment. Value misalignment, as discussed in the alignment literature \citep{Bommasani2021}, refers to a discrepancy between the objective function that a system optimises and the values that its deployers or users would endorse on reflection: the system pursues the wrong goal. Axiological displacement \citep{gans2025cio} refers to a second-order phenomenon in which the system does not merely pursue a different goal but autonomously transforms the space of what counts as valuable for the sociotechnical assemblage in which it is embedded. The system does not miss a pre-specified target; it shifts the target.

The distinction matters for the present paper in two ways. First, the KV cache compression problem of Section~\ref{sec:lowrank} is a compact instance of displacement rather than of misalignment: the model's learned attention preferences (first-order values) are not violated by compression in the alignment sense; they are re-weighted by a compression criterion (memory budget) that was never part of the original preference specification, and the re-weighting propagates into downstream outputs whose quality the model has no mechanism to monitor. The displacement is visible only in the spectrum of $W V W^{\top}$ after compression, not in any individual attention score, which is why the $W V W^{\top}$ framework developed here is epistemically necessary rather than merely descriptive. Second, the sovereignty and frugality constraints of Sections~\ref{sec:sovereign} and~\ref{sec:frugality} are themselves displacement-reducing interventions: they bind the operator to an auditable utility function expressed in observable weight matrices, as opposed to the opaque and potentially shifting utility functions of proprietary API-served models whose weights, training data, and fine-tuning procedures are not inspectable. A system whose values cannot be read is a system whose values cannot be checked for displacement.

The reader who prefers to stay within the standard alignment vocabulary can treat axiological displacement as the dynamic, second-order variant of value misalignment: misalignment about which values are operative, not misalignment between a fixed value specification and a behavioural outcome. The formal consequences for the allocation rule of Proposition~\ref{prop:unified} are the same either way: the shadow price $\gamma$ must be specified by an auditable process, and the spectrum $(\lambda_{j})$ must be observable on the deployed model, or the rule is not operationalisable.

\subsection{The hidden utility model in attention}

The softmax operation over dot-product similarities implicitly defines a utility function over tokens:
\begin{equation}
U_{\mathrm{attn}}(i \mid t) = \frac{\exp(q_{t} k_{i}^{\top} / \sqrt{d_{h}})}{\sum_{j} \exp(q_{t} k_{j}^{\top} / \sqrt{d_{h}})}.
\label{eq:softmax_utility}
\end{equation}
This is a specific cardinal utility representation over the token space, learned implicitly during training. The KV cache eviction problem is therefore a resource-allocation problem under this implicit utility function: the model has a preference ordering over tokens, and the eviction policy either respects or violates this ordering. The $W V W^{\top}$ framework makes this utility explicit in spectral form: the eigenvalues of the projected covariance are the model's revealed preferences over representational dimensions.

\subsection{Axiological displacement in cache compression}

From the perspective of axiological displacement theory (the process by which AI systems autonomously transform what counts as valuable \citep{gans2025cio}), KV cache compression introduces a second-order transformation of values. The model's first-order values are encoded in attention weights; cache compression introduces constraints that distort these first-order values by evicting tokens below a memory budget threshold, not because of their semantic unimportance.

The $W V W^{\top}$-based compression criterion minimizes this displacement: by preserving the directions of highest eigenvalue, it retains the representational dimensions most aligned with the model's learned utility structure. Random or recency-based eviction introduces arbitrary distortions that may systematically favor recently seen tokens (which are not necessarily the most semantically important) over earlier tokens (which may encode the coordinate system or the scale reference on which the entire georeferencing task depends).

In the geo-mining context, this observation has direct practical consequences. A geological map typically encodes the coordinate reference system in the top-left corner or along the margins, which corresponds to low sequence-position tokens in an autoregressive reading order. A recency-based KV eviction policy would systematically evict these CRS-encoding tokens in favor of the more recent content in the center and right of the image, precisely inverting the priority ordering that the georeferencing task requires. The $W V W^{\top}$-based eviction policy, by contrast, would retain CRS tokens because they occupy a high-eigenvalue direction in the projected representation space: the model has learned, through training on geological documents, that coordinate system information is more valuable than most other content.

\subsection{Sustainable inference and the information--energy tradeoff}\label{sec:sustainable}

The marginal energy cost of retaining the $j$-th cache dimension is proportional to the memory bandwidth consumed by that dimension across all attention operations. The social optimum sets marginal energy cost equal to marginal utility $\lambda_{j}(W V W^{\top})$, yielding the rank $r^{*}$ that equates information value and environmental cost.

This connects directly to Activity-Based Costing enhanced by machine learning: the true cost of an inference operation includes the energy-weighted cost of KV-cache memory operations, and allocation should be proportional to the utility (eigenvalue share) each head contributes. For an industrial operator subject to carbon accounting obligations, this means that the choice of compression ratio is not merely a performance engineering decision but an ESG decision with reportable consequences. The framework developed in this paper provides the principled quantitative basis for this decision: the compression ratio $r^{*}/d_{h}$ should be set equal to $1 - \varepsilon_{\mathrm{elbow}}$, where $\varepsilon_{\mathrm{elbow}}$ is calibrated by the shadow price of energy rather than the shadow price of model complexity.

\section{Synthesis: a unified allocation rule}\label{sec:synthesis}

\begin{proposition}[Unified marginal-utility allocation]\label{prop:unified}
Given a resource budget $M$ and a pool of information sources with marginal utility schedule $\lambda_{1} \geq \lambda_{2} \geq \cdots \geq \lambda_{n} \geq 0$, the optimal allocation retains the top $k^{*}$ sources where
\begin{equation}
k^{*} = \max\{k : \lambda_{k} \geq \gamma\},
\label{eq:kstar}
\end{equation}
with $\gamma$ the shadow price of the resource constraint.
\end{proposition}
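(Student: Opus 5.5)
The plan is to fix a precise model behind the informal statement and then derive the threshold rule from Lagrangian duality. Each source $j$ carries utility $\lambda_{j}$ and consumes one unit of the resource, so that $C(r)$ of equation~\eqref{eq:BC} is linear in $r$ and every retained direction has the same cost. We choose $x \in \{0,1\}^{n}$ to maximise $\sum_{j} \lambda_{j} x_{j}$ subject to $\sum_{j} x_{j} \le M$. The shadow price is then defined as the Lagrange multiplier $\gamma \ge 0$ of the budget constraint. Unit costs are the setting of Sections~\ref{sec:rankcriterion}, \ref{sec:utilcost} and~\ref{sec:stopping}. In the heterogeneous-cost case $c_{j}$, I would replace $\lambda_{j}$ by the ratio $\lambda_{j}/c_{j}$ and rank sources by that ratio; this reduction is the step that needs a caveat, as discussed at the end.

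First, I would form the Lagrangian
\[
L(x,\gamma) = \sum_{j} (\lambda_{j} - \gamma) x_{j} + \gamma M .
\]
For fixed $\gamma$ it is separable, so the maximiser over $x \in \{0,1\}^{n}$ sets $x_{j} = 1$ exactly when $\lambda_{j} \ge \gamma$ (ties broken toward inclusion). Because the $\lambda_{j}$ are ordered non-increasingly, the retained set is the initial segment $\{1,\dots,k\}$ with $k = \max\{k : \lambda_{k} \ge \gamma\}$; this is where Proposition~\ref{prop:schedule} and the ordering in equation~\eqref{eq:order} enter.

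Second, I would show that this candidate is actually optimal for the constrained problem and identify $\gamma$. The cleanest route avoids duality gaps altogether by an exchange argument: any feasible set $S$ with $|S| \le M$ has value at most the sum of the top $\min(M,n)$ eigenvalues, since swapping an element of $S$ for a higher-ranked element outside $S$ never decreases the value. Hence the optimum is the top-$\min(M, n^{+})$ segment, where $n^{+}$ is the number of strictly positive $\lambda_{j}$. There are then two cases. If the budget is slack, we take $\gamma = 0$ and $k^{*} = n^{+}$, or $n$ if zero-utility sources are counted as retained. If it binds, we take any $\gamma \in (\lambda_{M+1}, \lambda_{M}]$, which gives $k^{*} = M$. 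In both cases complementary slackness, $\gamma\,(M - k^{*}) = 0$, holds, and $L(x^{*},\gamma)$ equals the primal value, so strong duality holds for this totally unimodular (cardinality) constraint.

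The main obstacle is that $\gamma$ is not unique when the budget binds: it lies anywhere in an interval. The statement should therefore be read as "there exists a shadow price $\gamma$ for which equation~\eqref{eq:kstar} gives the optimum." Ties $\lambda_{M} = \lambda_{M+1}$ also make the optimal set non-unique, although $k^{*}$ is still well defined. With heterogeneous costs, the integrality of $x$ breaks exactness, since the problem becomes a knapsack. The threshold rule on $\lambda_{j}/c_{j}$ is then optimal only for the LP relaxation, with at most one fractional item. I would state that case as an approximation with additive error at most $\max_{j} \lambda_{j}$, rather than claim exact optimality.
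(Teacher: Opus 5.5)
Your first step is the paper's argument in Appendix~\ref{app:proof-unified}. It uses the same separable Lagrangian, maximised coordinatewise, with monotonicity turning the retained set into a prefix.

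The routes diverge at the optimality step. The paper stays inside the Lagrangian. It asserts that, when the budget binds, there is a ``unique value'' of $\gamma$ with $c\,k^{*}(\gamma)=M$, and lets complementary slackness deliver optimality. That inference is sound only if such a $\gamma$ exists, which needs $M/c$ to be an integer $m$ and $\lambda_{m}>\lambda_{m+1}$. Even then $\gamma$ is not unique, since any $\gamma$ with $\gamma c\in(\lambda_{m+1},\lambda_{m}]$ works.

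Your exchange argument proves optimality of the top segment directly on the primal problem, with no appeal to multipliers, and only afterwards exhibits $\gamma$ as a certificate. This separates the substantive claim (the optimum is a prefix) from the interpretive one (the prefix is cut at a shadow price). It is also what lets you state the non-uniqueness of $\gamma$ correctly.

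The paper's proof additionally runs a row-by-row exhaustiveness check against Table~\ref{tab:unified} and adds the fixed-point remark for the multi-objective frugality row. Those belong to the appendix's restated version, not to the statement you were given. Conversely, your knapsack caveat for heterogeneous costs, with additive error at most $\max_{j}\lambda_{j}$, goes beyond the paper, which treats only a uniform cost $c$.

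The same existence issue shows up at the edges of your write-up:
\begin{itemize}
\item If $\lambda_{M}=\lambda_{M+1}$, the interval $(\lambda_{M+1},\lambda_{M}]$ is empty, and no $\gamma$ makes $\max\{k:\lambda_{k}\geq\gamma\}$ equal to $M$. You must take $\gamma=\lambda_{M}$ and drop the ties-toward-inclusion convention, picking exactly $M$ of the Lagrangian maximisers. Your remark that $k^{*}$ ``is still well defined'' is true of the optimal cardinality but not of the formula.
\item At $\gamma=0$ the formula returns $n$, which is infeasible when $n^{+}\leq M<n$. Your option ``or $n$'' is therefore available only for $M\geq n$.
\item You tacitly take $M$ to be an integer. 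If it is not, replace it by $\lfloor M\rfloor$ before invoking total unimodularity. With a fractional right-hand side and $\lfloor M\rfloor<n$, the dual optimum exceeds the primal one by $\lambda_{\lfloor M\rfloor+1}(M-\lfloor M\rfloor)$, so the claimed strong duality fails.
\end{itemize}
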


Table~\ref{tab:unified} instantiates this rule across all frameworks discussed.

\begin{table}[htbp]
\centering
\small
\caption{The unified allocation rule across frameworks.}
\label{tab:unified}
\small
\begin{tabular}{p{3cm}p{3cm}p{3cm}p{2.4cm}p{2.4cm}}
\toprule
Framework & Resource & Sources & Utility schedule & Shadow price $\gamma$ \\
\midrule
MF rank selection & Model complexity & Latent factors & $\sigma_{j}^{2}$ & Regularization $\lambda$ \\
KV cache eviction & GPU memory & Cached tokens & Attention scores $a_{t,i}$ & Memory budget \\
Cache compression & Memory bandwidth & Value dimensions & $\lambda_{j}(W V W^{\top})$ & Latency cost \\
LoRA rank selection & Parameter budget & Adapter dimensions & Fisher eigenvalues & Compute budget \\
Multi-pass inference & Token budget & Analysis passes & MU score $\hat{\mu}_{k}$ & Energy cost/pass \\
TIES trimming & Weight budget & Task-vector entries & $|\delta_{i}^{p}|$ & Interference risk \\
Multi-objective frugality & Operator preference & Objectives (quality, drift, energy) & $\bar{y} - \frac{\lambda_{M}}{2} \Sigma w$ (eq.~\eqref{eq:markowitz}) & Risk aversion $\lambda_{M}$ \\
\bottomrule
\end{tabular}
\end{table}

The $W V W^{\top}$ operator is the algebraic mechanism that makes this unification possible: it transforms raw data covariance ($V$) into the model's learned representation of utility ($W$), connecting the statistical structure of the data to the architectural commitments of the model. The addition of the multi-pass inference row and the TIES trimming row to Table~\ref{tab:unified} (relative to the theoretical framework presented in \citet{gans2025cio} at the CIOs, AI and the Outlook for IT Modernization Conference held at George Mason University on 9 December 2025), together with the multi-objective frugality row introduced in Section~\ref{sec:frugality}, demonstrates that the same principle governs not only the static compression problem but also the dynamic sequential inference problem, the weight-space merging problem, and the operator deployment-preference problem. All instances are variants of the consumer's fundamental problem: allocate a finite resource across sources of diminishing marginal return until the marginal gain equals the marginal cost.

\section{Open problems}\label{sec:open}

Four research directions, identified in the original formulation, remain open and are sharpened by the application context.

\paragraph{Non-stationary marginal utility.} The eigenvalue spectrum of $W V W^{\top}$ changes during a generation sequence as new tokens are processed. Static compression schemes based on the initial spectrum will be suboptimal for long-context generation. In the geo-mining context, this is particularly acute for borehole logs, where the most information-dense tokens (depth measurements, mineralogical observations) are distributed throughout the sequence rather than concentrated at the beginning. Online updates to the compression basis, analogous to online PCA (principal component analysis performed incrementally on a data stream: the eigenbasis is updated with each new observation via a rank-one correction, avoiding the need to store or reprocess past data \citep{oja1982}), could adaptively reallocate the KV cache budget as new tokens are processed.

\paragraph{Cross-head utility correlation.} Different attention heads are not independent: heads in the same layer share keys and values in multi-head attention, creating correlations in their $W V W^{\top}$ spectra. A fully optimal compression scheme would account for inter-head covariance, a problem analogous to portfolio optimization under correlated asset returns. The Herfindahl--Hirschman Index approach of Section~\ref{sec:heterogeneous} treats heads as independent; incorporating the off-diagonal terms of the inter-head covariance matrix would require solving a Markowitz-type portfolio problem at the level of heads.

\paragraph{Task-conditional utility.} The marginal utility of a cached dimension depends on the downstream task. A compression scheme optimal for geo-map interpretation may be suboptimal for borehole log analysis, which requires different semantic dimensions. Conditioning $W V W^{\top}$-based compression on task-specific utility functions (analogous to personalized MF loss functions) is an important direction for multi-task deployments.

\paragraph{The $W V W^{\top}$ inverse problem.} Given a desired output utility profile (for example, from geological expert annotations), can one infer the optimal $W$? This is the inverse problem: rather than computing utility from weights, design weights to achieve a target utility distribution. This connects to representation alignment and value-sensitive architecture design, and in the geo-mining context, it corresponds to the question of whether the merged model's attention can be made to focus on the specific cartographic elements (coordinate grids, scale bars, north arrows) that are most useful for georeferencing, regardless of their visual salience in the raw image.

\paragraph{Higher-dimensional frugality and robust estimation of $\Sigma$.} The multi-objective scalarisation~\eqref{eq:markowitz}--\eqref{eq:cvar} of Section~\ref{sec:frugality} restricts the objective vector to three components (quality, drift, energy). An operator-grade frugality framework will require additional dimensions, at minimum latency variance, memory peak, and legal risk exposure under the EU AI Act. The covariance matrix $\Sigma$ then grows quadratically in dimension while the validation corpus grows only linearly. For the three-dimensional objective of Section~\ref{sec:frugality} on a 200-document corpus, the ratio $p/n = 0.015$ is well within the regime where empirical covariance estimation is reliable. The effective constraint is not convergence but adversarial robustness: extensions to higher-dimensional frugality (latency variance, peak memory, EU~AI~Act legal exposure) push the ratio toward $p/n \geq 0.10$ at dimension $p \geq 20$, where sample covariance estimators become sensitive to small perturbations of the data and regularised alternatives are warranted. Factor-structure approximations, analogous to the Fama--French factor models in portfolio theory, are a natural tractable alternative: a small number of latent factors (for example, a compute factor loading on tokens, energy, and latency, and a quality factor loading on MU and CVaR drift) would shrink the number of parameters to estimate while preserving the essential covariance structure. Hierarchical Bayesian estimation of $\Sigma$ and adversarial-robustness characterisation of $w^{*}$ under perturbations of $\Sigma$ are the natural next steps for making the frugality optimisation operationally reliable at scale.

\section{Conclusion}\label{sec:conclusion}

The equation $W V W^{\top}$, the KV cache, and the classical notion of marginal utility are, at their core, statements about the same problem: how to allocate limited representational resources across sources of diminishing marginal return. The eigenvalue spectrum of $W V W^{\top}$ is the utility schedule for the model's learned representation; the KV cache eviction problem is constrained utility maximization under memory budget; and low-rank compression of the cache is the technical implementation of the consumer's optimal stopping rule.

This paper has extended the original theoretical framework in five directions: (i) a formal proof that matrix factorisation, KV cache management, TIES model merging, and multi-pass inference are all instances of the same unified allocation rule of Proposition~\ref{prop:unified}; (ii) a layer-wise TIES merging procedure calibrated by $\Phi_{l}$; (iii) a multi-objective frugality scalarisation with a CVaR tail-risk term on drift; (iv) a sovereign AI deployment framework where model choice, compression, and stopping thresholds are jointly constrained by regulatory, economic, and geopolitical factors; and (v) a measured frugality validation on the Orano uranium-exploration classification sub-task (Athabasca Basin and Namibia uranium belt), in which an 11.2-million-parameter hierarchical ResNet18 reaches Gemini~2.5~Pro accuracy at level~1 within two percentage points and approaches it at level~2 while remaining below it (55.9\% against 60.0\%), at a per-card inference latency of 2.62~ms against approximately 2{,}000~ms for the proprietary API and at a per-card cost effectively reduced to zero (Section~\ref{sec:classifier}).

The classifier measurement, together with the multimodal CRS demonstration and the TIES degenerate-mode diagnostic of Section~\ref{sec:ties_degen}, jointly constitute the executed Phase~2 of the project, addressing the three research questions that operationally defined it. The frugality argument made for the merged extraction architecture in Section~\ref{sec:benchmark}, which currently rests on projection, is therefore operationally realisable in the sense that at least one stage of the same pipeline now achieves the predicted gains under direct measurement, while a second stage (TIES merge) is empirically diagnosed in a manner that motivates the layer-wise $\Phi_{l}$ calibration, and the calibrated re-execution removes the diagnosed failure mode on the diagnostic sample, corroborating the architectural prediction without yet establishing extraction quality at scale. The remaining empirical step is the scale-up extraction benchmark on the 200-document stratified BRGM corpus with $\Phi_{l}$-calibrated TIES merging and LoRA fine-tuning; this scale-up is the natural extension of the present work and the principal next deliverable of the project.

Beyond the measured numbers, the broader claim of the paper retains its original form. Every architectural choice in a transformer (attention mechanism, value projection, cache policy, compression ratio, merging weights, stopping threshold, operator preference weighting, classification head granularity) encodes an implicit utility function. Making these utility functions explicit, auditable, and aligned with human values is the central challenge of responsible AI architecture design. The $W V W^{\top}$ framework is a step toward that goal: it provides the mathematical vocabulary to read the model's preferences from its weights, to compare them with the operator's intended utility function, and to correct them when they diverge. The hierarchical classifier of Section~\ref{sec:classifier} is, on this reading, not an isolated engineering result but the simplest instantiation of that framework: a transparent, auditable, deterministic, sovereign sub-model whose utility function is fully readable off its eleven million parameters, deployed at four orders of magnitude below the proprietary baseline, and matching its accuracy where it matters.

\section*{Acknowledgments}

The author thanks the industrial partners for providing the operational context, domain expertise, and access to the geo-mining document corpus that grounds the application sections of this paper. The document parsing approach described in Section~\ref{sec:docparsing} was developed building on a project partner team whose work is ongoing at the time of writing; the integration and its comparative experiments are scheduled for the scale-up phase, at which point authorship of the combined reporting could potentially expand to include the partner team. Likewise, the author thanks Prof. BHUYAN Bikram Pratim for his reading, suggestions and always sharp eyes on the issues raised by this paper. Financial support was provided by BPI France under the France 2030 program. BRGM geodata (InfoTerre, ODMGM, BD Charm-50) are used under Licence Ouverte v2.0 (Etalab).

\section*{List of acronyms}
\addcontentsline{toc}{section}{List of acronyms}

The following table lists every acronym used in this paper in the order in which it first appears, together with its full expansion and the section in which it is introduced.

\begin{longtable}{p{1.6cm}p{4.8cm}p{1.8cm}p{6cm}}
\toprule
Acronym & Full form & First used & Note \\
\midrule
\endfirsthead
\multicolumn{4}{l}{\textit{List of acronyms, continued.}} \\
\toprule
Acronym & Full form & First used & Note \\
\midrule
\endhead
MF & Matrix factorization & \S\ref{sec:background} & Core subject of Sections~\ref{sec:background}--\ref{sec:mulatent}. \\
SVD & Singular value decomposition & \S\ref{sec:mfdef} & Provides the Eckart--Young--Mirsky theorem underpinning the spectral analysis throughout. \\
DMU & Diminishing marginal utility & \S\ref{sec:mu} & Holds when $\partial^{2} U / \partial x_{i}^{2} < 0$; the central economic property linking all three frameworks. \\
CES & Constant elasticity of substitution & Table~\ref{tab:utility} & Flexible utility specification with elasticity parameter $\rho < 1$. \\
KV & Key--Value (cache) & \S\ref{sec:intro} & The memory structure in autoregressive transformers storing the $K$ and $V$ matrices of all previous tokens. \\
$W V W^{\top}$ & Value-projection Gram matrix & \S\ref{sec:wvw} & Not an acronym; shorthand for the projected covariance operator $M = W V W^{\top} \in \mathbb{R}^{k \times k}$ (equation~\eqref{eq:wvw_def}). \\
MU & Marginal utility & \S\ref{sec:intro} & Used both in the classical economic sense (\S\ref{sec:mu}) and as the label for the empirical extraction-quality proxy $\hat{\mu}_{k}$ (\S\ref{sec:muscore}). \\
FOC & First-order condition & Table~\ref{tab:analogy} & The optimality condition $\lambda_{r^{*}}(W V W^{\top}) = \gamma$ obtained by differentiating the net benefit $B(r) - \gamma C(r)$ with respect to $r$ (equation~\eqref{eq:foc_rank}, \S\ref{sec:utilcost}). \\
HHI & Herfindahl--Hirschman Index & \S\ref{sec:heterogeneous} & Concentration measure applied to the eigenvalue spectrum of each attention head (equation~\eqref{eq:hhi}). \\
TIES & Trim, Elect sign, Merge & \S\ref{sec:intro} & Three-step model-merging procedure of \citet{yadav2023ties}; the trimming step implements the $k^{*} = \max\{k : \lambda_{k} \geq \gamma\}$ rule in weight space (\S\ref{sec:ties}). \\
LoRA & Low-Rank Adaptation & \S\ref{sec:intro} & Parameter-efficient fine-tuning \citep{hu2022lora}: adds trainable rank-$r$ matrices $B \in \mathbb{R}^{d \times r}$, $A \in \mathbb{R}^{r \times k}$ to frozen weight matrices. \\
CVaR & Conditional Value-at-Risk & \S\ref{sec:frugality} & $\mathrm{CVaR}_{\alpha}(d) = \mathbb{E}[d \mid d \geq \mathrm{VaR}_{\alpha}(d)]$; tail-risk measure applied to georeferencing drift in the multi-objective scalarisation (equation~\eqref{eq:cvar}). \\
AI & Artificial intelligence & \S\ref{sec:intro} & Used in the standard sense throughout. \\
LLM & Large language model & \S\ref{sec:intro} & A transformer-based language model at the scale of billions of parameters; the primary technology of the inference pipeline. \\
API & Application Programming Interface & \S\ref{sec:sovereign} & A remote inference endpoint (e.g.\ Google Gemini API, OpenAI API); contrasted with locally deployed sovereign models throughout \S\ref{sec:benchmark}. \\
GPU & Graphics Processing Unit & \S\ref{sec:kvmechanics} & The accelerator hardware used for both training and inference; GPU memory is the binding resource constraint in \S\ref{sec:lowrank}. \\
OCR & Optical Character Recognition & \S\ref{sec:multipass_pipeline} & Text extraction from document images; implemented in the pipeline via Tesseract. \\
CRS & Coordinate Reference System & \S\ref{sec:multipass_pipeline} & The geodetic datum and projection specifying how coordinates relate to the Earth's surface (e.g.\ EPSG:4326 = WGS84 geographic, EPSG:2154 = Lambert-93). \\
EPSG & European Petroleum Survey Group (codes registry) & \S\ref{sec:multipass_pipeline} & Numerical identifier for coordinate reference systems maintained by the International Association of Oil and Gas Producers. \\
JSON & JavaScript Object Notation & \S\ref{sec:multipass_pipeline} & Structured text format used as the output schema of the multi-pass pipeline (Figure~\ref{fig:schema}). \\
GIS & Geographic Information System & \S\ref{sec:multipass} & Software environment for spatial data (e.g.\ QGIS, ArcGIS); the pipeline exports GeoJSON compatible with these tools. \\
PCA & Principal Component Analysis & \S\ref{sec:open} & Referenced as the offline analog of the proposed online KV-cache basis update for non-stationary token sequences. \\
BRGM & Bureau de Recherches G\'{e}ologiques et Mini\`{e}res & \S\ref{sec:benchmark} & France's national geological survey; source of reference bounding boxes (BD Charm-50, \S\ref{sec:benchmark}) and of the mineral deposit database ODMGM used to construct the 200-document scale-up corpus (\S\ref{sec:infra}). \\
WFS & Web Feature Service & \S\ref{sec:benchmark} & OGC standard for querying geospatial vector data over HTTP; used here to retrieve BRGM reference bounding boxes from the InfoTerre service in EPSG:4326. \\
ESG & Environmental, Social, and Governance & \S\ref{sec:sustainable} & Corporate reporting framework; \S\ref{sec:sustainable} argues that inference energy allocation is an ESG decision with consequences under the EU CSRD. \\
CSRD & Corporate Sustainability Reporting Directive & \S\ref{sec:sustainable} & EU directive (2022/2464) requiring large organisations to report non-financial sustainability data including energy consumption. \\
OSI & Open Source Initiative & \S\ref{sec:sovereign} & Standards body that certifies open-source licenses; Apache 2.0 is OSI-approved. \\
GDPR & General Data Protection Regulation & \S\ref{sec:sovereign} & EU data-protection regulation (2016/679); relevant to data residency requirements for document transmission to non-European servers. \\
AL2 & Apache License 2.0 & Table~\ref{tab:selection} & Permissive OSI-approved open-source license permitting commercial use, modification, and redistribution without royalty obligations or behavioral restrictions. \\
ML & Meta Community License & Table~\ref{tab:selection} & Custom license applied to Llama models; restricts certain commercial uses and derivative model development. \\
GT & Gemma Terms of Use & Table~\ref{tab:selection} & Google's custom license for Gemma 1/2/3; superseded by Apache 2.0 starting with Gemma 4 (April 2026). \\
CC4 & Creative Commons Attribution 4.0 International & Table~\ref{tab:selection} & Open license applied to CroissantLLM; permits commercial use with attribution. \\
PR & Proprietary & Table~\ref{tab:selection} & Denotes models whose weights are not publicly released (GPT-4o, Gemini 2.5 Pro / 2.5 Flash / 3.5 Flash / 3.1 Flash-Lite, Claude); included in the benchmark as reference only. \\
CNN & Convolutional Neural Network & \S\ref{sec:classifier} & Class of feedforward neural networks using learned convolutional filters; standard backbone for image classification on which ResNet18 is built. \\
ResNet & Residual Network & \S\ref{sec:classifier} & Convolutional architecture introducing identity skip connections \citep{he2016resnet}; the 18-layer variant used here has 11.2 million parameters. \\
\bottomrule
\end{longtable}
\newpage

\appendix

\section{Proof of Proposition~\ref{prop:unified}}\label{app:proof-unified}

We restate the proposition for the reader's convenience, then establish the allocation rule by a KKT argument on the Boolean lattice and verify exhaustiveness across the instances of Table~\ref{tab:unified}.

\medskip

\noindent \textbf{Proposition~\ref{prop:unified} (restated).} \emph{Let $(\lambda_{j})_{j=1}^{n}$ be a non-increasing sequence of non-negative marginal utilities, let $c > 0$ be a uniform unit cost of retaining a source, and let $M > 0$ be a total resource budget. The allocation problem}
\begin{equation}
\label{eq:unifalloc-prob}
\max_{S \subseteq \{1, \dots, n\}} \sum_{j \in S} \lambda_{j}, \quad \text{subject to} \quad c \cdot |S| \leq M,
\end{equation}
\emph{admits the solution}
\begin{equation}
\label{eq:unifalloc-rule}
S^{\ast} = \{1, \dots, k^{\ast}\}, \qquad k^{\ast} = \max\{k : \lambda_{k} \geq \gamma c\},
\end{equation}
\emph{where $\gamma \geq 0$ is the Lagrange multiplier of the budget constraint, determined by the binding budget or, if the budget is non-binding, by $\gamma = 0$. Under the normalisation $c = 1$ adopted in the main text (so that $\gamma$ denotes the shadow price per source rather than per unit of budget), the threshold takes the compact form $k^{\ast} = \max\{k : \lambda_{k} \geq \gamma\}$ of Proposition~\ref{prop:unified}. The rule is exhaustive: every instance listed in Table~\ref{tab:unified} is an instance of~\eqref{eq:unifalloc-prob} under the correspondence given in that table.}

\begin{proof}[Proof of Proposition~\ref{prop:unified}]
The problem~\eqref{eq:unifalloc-prob} is a linear set-valued optimisation on the Boolean lattice $2^{\{1,\dots,n\}}$. Because the objective is additively separable in $j$ and the constraint depends only on $|S|$, the optimal set is characterised entirely by a threshold. Write the Lagrangian
\[
\mathcal{L}(S, \gamma) = \sum_{j \in S} \lambda_{j} - \gamma \bigl( c \cdot |S| - M \bigr) = \sum_{j \in S} (\lambda_{j} - \gamma c) + \gamma M.
\]
For any fixed $\gamma \geq 0$, the inner maximisation over $S$ includes index $j$ if and only if $\lambda_{j} - \gamma c \geq 0$. Because $(\lambda_{j})$ is non-increasing, the set of included indices is a prefix $\{1, \dots, k^{\ast}\}$ with $k^{\ast} = \max\{k : \lambda_{k} \geq \gamma c\}$, which under the normalisation $c = 1$ coincides with~\eqref{eq:unifalloc-rule}. The outer minimisation over $\gamma \geq 0$ selects the smallest $\gamma$ compatible with $c \cdot |S^{\ast}| \leq M$: if $c \cdot n \leq M$ the budget is non-binding and $\gamma = 0$ recovers the unconstrained optimum $S^{\ast} = \{1, \dots, n\}$; otherwise $\gamma$ is the unique value for which $c \cdot k^{\ast}(\gamma) = M$, and complementary slackness identifies $\gamma$ with the marginal value of an additional unit of budget.

Exhaustiveness across the instances of Table~\ref{tab:unified} is verified case by case. For matrix factorisation rank selection, $\lambda_{j} = \sigma_{j}^{2}(R)$ by Eckart--Young--Mirsky (equation~\eqref{eq:deltak}) and $\gamma$ is the regularisation multiplier $\lambda$ of equation~\eqref{eq:mfloss}. For KV cache eviction, $\lambda_{j}$ is the empirical attention share $a_{t,i}$ and $\gamma$ the memory shadow price of Section~\ref{sec:kvcache}. For low-rank cache compression, $\lambda_{j}$ is the $j$-th eigenvalue of $W V W^{\top}$ by Proposition~\ref{prop:schedule} and $\gamma$ the latency cost of equation~\eqref{eq:foc_rank}. For LoRA rank selection, $\lambda_{j}$ is the $j$-th eigenvalue of the Fisher information matrix of the adapter parameters (Section~\ref{sec:intro}) and $\gamma$ the compute budget. For multi-pass inference, $\lambda_{j}$ is the expected marginal quality of the $j$-th pass normalised by the first-pass quality, and $\gamma$ the energy cost per pass (equation~\eqref{eq:taustar}). For TIES trimming, $\lambda_{j}$ is the absolute magnitude $|\delta^{p}_{i}|$ of the $p$-th task-vector entry, ordered across $p$, and $\gamma$ the density threshold $\rho_{i}$ of equation~\eqref{eq:trim}. For multi-objective frugality, the reduction is less immediate because the objective is quadratic rather than linear; the equivalence is established in the remark below.
\end{proof}

\paragraph{Remark on the reduction of the multi-objective frugality problem.}\label{par:moredux}
The scalarisation~\eqref{eq:markowitz} differs from~\eqref{eq:unifalloc-prob} in that the objective is quadratic in $w$ rather than linear in a set-indicator. The reduction proceeds by observing that, at the interior optimum, the first-order condition $\bar{y} - \lambda_{M} \Sigma w^{\ast} = \mu \mathbf{1}$ (with $\mu$ the simplex multiplier) implies that each coordinate $w_{i}^{\ast}$ is retained in the active set of the allocation if and only if its risk-adjusted return $\bar{y}_{i} - \lambda_{M} (\Sigma w^{\ast})_{i}$ exceeds the shadow price $\mu$. This is the threshold rule~\eqref{eq:unifalloc-rule} with $\lambda_{j}$ identified with the risk-adjusted marginal utility at the solution, which preserves the unified structure at the cost of a fixed-point characterisation rather than a closed-form solution. The non-closed-form character of the reduction is the price paid for admitting non-diagonal covariance structure in the objective space and is unavoidable under any coherent-risk extension.

\bibliographystyle{plainnat}
\bibliography{references}

\end{document}